%File: formatting-instruction.tex
\documentclass[a4,a4wide]{article}
\usepackage{aaai}
\usepackage{textcomp}
\usepackage{times}
\usepackage{helvet}
\usepackage{courier}
\usepackage{xcolor}
\usepackage{amssymb}
\usepackage{amsmath}
\usepackage{amsthm}
\usepackage{xcolor}

\newcommand{\ie}{i.e.\ }

\newcommand{\wrt}{w.r.t.\ }
\newcommand{\T}{\textbf{T}}

\newcommand{\q}{\mbox{$\mid\hspace{-2.4pt}\sim$}}
\newcommand{\alc}{\mathcal{ALC}}
\newcommand{\alcio}{\mathcal{ALCIO}}
\newcommand{\alcqo}{\mathcal{ALCQO}}

\newcommand{\alcf}{\mathcal{ALCF}}
\newcommand{\alco}{\mathcal{ALCO}}
\newcommand{\logicplus}{\textbf{$\alc$+T$^+$}}
\newcommand{\logicplusminio}{\textbf{$\alcio$+T$^+_{\mathsf{min}}$}}
\newcommand{\logicplusminqo}{\textbf{$\alcqo$+T$^+_{\mathsf{min}}$}}
\newcommand{\logicplusmin}{\textbf{$\alc$+T$^+_{\mathsf{min}}$}}
\newcommand{\logicOmin}{\textbf{$\alco$+T$_{\mathsf{min}}$}}
\newcommand{\logic}{\textbf{$\alc$+T}}
\newcommand{\logicmin}{\textbf{$\alc$+T$_{\mathsf{min}}$}}
\newcommand{\logicminio}{\textbf{$\alcio$+T$_{\mathsf{min}}$}}
\newcommand{\logicminqo}{\textbf{$\alcqo$+T$_{\mathsf{min}}$}}
\newcommand{\tbox}{\mathcal{T}}
\newcommand{\abox}{\mathcal{A}}
\newcommand{\kbase}[3]{#1=(#2,#3)}
\newcommand{\I}{\mathcal{I}}
\newcommand{\J}{\mathcal{J}}

\newcommand{\Min}{\mathsf{Min}}
\newcommand{\prefs}{<_{\mathcal{L}_\T}}
\newcommand{\pref}{<_{\mathcal{L}_\T}^+}
\newcommand{\minset}[2]{#1^{\Box^-}_{\mathcal{L}_{\T_{#2}}}}
\newcommand{\nexpnp}{\textbf{$\mathsf{NExp^{NP}}$}}
\newcommand{\expt}{\textbf{$\mathsf{EXPTIME}$}}
\newcommand{\circs}[3]{\mathsf{Circ_{#1}}(#2,#3)}
\newcommand{\cpattern}[4]{\mathsf{#1}=(#2,#3,#4)}
\newcommand{\sig}[1]{\mathsf{sig}(#1)}
\newcommand{\cset}{\mathsf{N_C}}
\newcommand{\rset}{\mathsf{N_R}}
\newcommand{\iset}{\mathsf{N_I}}
\newcommand{\tset}{\mathsf{N_T}}
\newcommand{\minsat}[3]{#1 \models_{\mathsf{min}}^{#3} #2}

\newcommand{\model}[1]{\mathcal{M}_#1}
\newcommand{\alcsigreduct}[1]{#1|_{\alc}}

\theoremstyle{plain}
\newtheorem{theorem}{Theorem}
\newtheorem{lemma}[theorem]{Lemma}
\newtheorem{corollary}[theorem]{Corollary}
\theoremstyle{definition}
\newtheorem{definition}[theorem]{Definition}
\newtheorem{example}[theorem]{Example}
\theoremstyle{remark}
\newtheorem*{remark}{Remark}

\frenchspacing
\setlength{\pdfpagewidth}{8.5in}
\setlength{\pdfpageheight}{11in}
\pdfinfo{
/Title (On the Non-Monotonic Description Logic ALC+Tmin)
/Author (Oliver Fern\'andez Gil)}
\setcounter{secnumdepth}{0}  
 \begin{document}
% The file aaai.sty is the style file for AAAI Press 
% proceedings, working notes, and technical reports.
%
\nocopyright
\title{On the Non-Monotonic Description Logic $\logicmin$}
\author{Oliver Fern\'andez Gil\thanks{Supported by DFG Graduiertenkolleg 1763 (QuantLA).}\\
University of Leipzig\\Department of Computer Science\\
fernandez@informatik.uni-leipzig.de\\
}
\maketitle
\begin{abstract}
\begin{quote}
 In the last 20 years many proposals have been made to incorporate non-monotonic reasoning into description logics, ranging from approaches based on 
 default logic and circumscription to those based on preferential semantics. In particular, the 
non-monotonic description logic $\logicmin$ uses a combination of 
 the preferential semantics with minimization of a certain kind of concepts, which represent 
atypical instances of a class of elements. One of its drawbacks is that it suffers from the problem 
known as the \emph{property blocking inheritance}, which
  can be seen as a weakness from an inferential point of view. In this paper we propose an extension of $\logicmin$, namely $\logicplusmin$, with the purpose to solve the mentioned problem. 
  In addition, we show the close connection that exists between $\logicplusmin$ and 
concept-circumscribed knowledge bases. Finally, we study the complexity of deciding the classical 
reasoning tasks in $\logicplusmin$.
\end{quote}
\end{abstract}

\section{Introduction.}

 Description Logics (DLs) \cite{DBLP:conf/dlog/2003handbook} are a well-investigated family of logic-based knowledge 
representation formalisms. They can be used to represent knowledge of a problem domain in a 
structured and formal way. To describe this kind of knowledge each DL provides constructors that 
allow to build concept descriptions. A knowledge base consists of a TBox that states general 
assertions about the problem domain and an ABox that asserts properties about explicit individuals.

 Nevertheless, classical description logics do not provide any means to reason about exceptions. In 
the past 20 years research has been directed with the purpose to incorporate non-monotonic 
reasoning formalisms into DLs. In \cite{DBLP:journals/jar/BaaderH95}, an 
integration of Reiter's default logic \cite{DBLP:journals/ai/Reiter80} within the terminological 
language $\alcf$ is proposed and later extended in \cite{DBLP:journals/jar/BaaderH95a} to allow the 
use of priorities between default rules. Taking a different 
approach, \cite{DBLP:journals/jair/BonattiLW09} introduces circumscribed DLs and analyses in 
detail the complexity of reasoning in circumscribed extensions of expressive description logics. In addition, recent 
works \cite{DBLP:conf/jelia/CasiniS10,DBLP:conf/ausai/BritzMV11,DBLP:conf/dlog/GiordanoGOP13} 
attempt to introduce defeasible reasoning by extending DLs with preferential and rational semantics based on the 
KLM approach to propositional non-monotonic reasoning \cite{DBLP:journals/ai/LehmannM92}.

 In particular, the logic $\logicmin$ introduced in \cite{DBLP:journals/ai/GiordanoGOP13} combines the use of a preferential semantics and the minimization of a certain 
 kind of concepts. This logic is built on top of the description logic $\alc$ 
\cite{DBLP:journals/ai/Schmidt-SchaussS91} and is based on a typicality operator \textbf{T} whose 
intended meaning 
is to single out the \emph{typical} instances of a class $C$ of elements. The 
 underlying semantics of \textbf{T} is based on a preference relation over the domain. More 
precisley, classical $\alc$ interpretations are 
 equipped with a partial order over the domain elements setting a preference relation among them. 
Based on such an order, for instance, the set of \emph{typical birds} or 
 \textbf{T}$(\mathsf{Bird})$, comprises those individuals from the domain that are birds and minimal in the class of all birds with respect to the preference order. Using this operator, 
 the subsumption statement \textbf{T}$(\mathsf{Bird})\sqsubseteq \mathsf{Fly}$ expresses that 
\emph{typical birds fly}. In addition, the use of a minimal 
 model semantics considers models that minimize the atypical instances of $\mathsf{Bird}$. Then,  
when no information is given about whether a bird is able to fly or not, it is possible to 
\emph{assume} that it flies in view of the assertion \textbf{T}$(\mathsf{Bird})\sqsubseteq 
\mathsf{Fly}$.
 
 As already pointed out by the authors, the preferential order over the domain limits the logic $\logicmin$ in the sense that if a class $P$ is an exceptional case 
 of a superclass $B$, then no default properties from $B$ can be inherited by $P$ during the 
reasoning process, including those that are unrelated with the exceptionality 
 of $P$ with respect to $B$. For example:\begin{align}
           \mathsf{Penguin} &\sqsubseteq \mathsf{Bird} \notag \\
           \T(\mathsf{Bird}) &\sqsubseteq \mathsf{Fly} \sqcap \mathsf{Winged} \notag \\
           \T(\mathsf{Penguin}) &\sqsubseteq \neg \mathsf{Fly} \notag
       \end{align} It is not possible to infer that typical penguins have wings, even when the only reason for them to be exceptional with respect to birds is that they
        normally do not fly.
 
 In the present paper we extend the non-monotonic logic $\logicmin$ from \cite{DBLP:journals/ai/GiordanoGOP13} with the introduction of several preference relations. We
  show how this extension can handle the inheritance of defeasible properties, resembling the use of abnormality predicates from 
  circumscription \cite{DBLP:journals/ai/McCarthy86}. In addition, we show the close relationship 
between the extended non-monotonic logic $\logicplusmin$ and 
  \emph{concept-circumscribed} knowledge bases \cite{DBLP:journals/jair/BonattiLW09}. Based on such 
a relation, we provide a complexity analysis of the different reasoning tasks showing $\nexpnp$- 
completeness for 
  concept satisfiability and co-$\nexpnp$-completeness for subsumption and instance checking.
  
   Missing proofs can be found in the long version of the paper at 
http://www.informatik.uni-leipzig.de/\texttildelow fernandez/NMR14long.pdf.
 \section{The logic $\logicmin$.}
 We recall the logic $\logic$ proposed in \cite{DBLP:journals/ai/GiordanoGOP13} and its 
non-monotonic extension $\logicmin$. Let $\cset, \rset$ and $\iset$ be three countable sets of 
\emph{concept names, role names and 
individual names}, respectively. The language defined by $\logic$ distinguishes between normal 
concept descriptions and \emph{extended concept} descriptions which are formed according to the 
following syntax rules:
  \begin{align}
      C &::= A \: \mid \: \neg C \: \mid \: C \sqcap D \mid \: \exists r.C , 
\notag \\
      C_e &::= C \: \mid \: \T(A) \: \mid \: \neg C_e \: \mid \: C_e \sqcap D_e \notag     
  \end{align}where $A \in \cset, r \in \rset$, $C$ and $D$ are classical $\alc$ concept 
descriptions, $C_e$ 
and $D_e$ are extended concept descriptions, and \textbf{T} is the newly introduced operator. We 
use the 
usual abbreviations $C \sqcup D$ for $\neg(\neg C \sqcap \neg D)$, $\forall r.C$ for $\neg \exists 
r. \neg C$, $\top$ for $A \sqcup \neg A$ and $\bot$ for $\neg \top$.

 A knowledge base is a pair $\kbase{\mathcal{K}}{\tbox}{\abox}$. The TBox $\tbox$ contains 
subsumption statements $C \sqsubseteq D$ where $C$ is a classical $\alc$ concept or an extended 
concept of the form $\T(A)$, and $D$ is a 
classical $\alc$ concept. The Abox $\abox$ contains assertions of the form $C_e(a)$ and $r(a,b)$ 
where $C_e$ is an extended concept, $r\in \rset$ and $a,b \in \iset$. The assumption that the operator $\T$ is applied to concept names is 
without loss of generality. For a complex $\alc$ concept $C$, one can always introduce a fresh 
concept name $A_C$ which can be made equivalent to $C$ by adding the subsumption statements $A_C 
\sqsubseteq C$ and $C \sqsubseteq A_C$ to the background TBox. Then, $\T(C)$ can be equivalently 
expressed as $\T(A_C)$.

 In order to provide a semantics for the operator $\T$, usual $\alc$ interpretations are equipped 
with a preference relation $<$ over the domain elements:

 \begin{definition}[Interpretation in $\logic$]\label{logic_int_def}  
 An $\logic$ interpretation $\I$ is a tuple 
$(\Delta^\I,.^\I,<)$ where:
\begin{itemize}
 \item $\Delta^\I$ is the domain,
 \item $.^\I$ is an interpretation function that maps concept names to subsets of $\Delta^\I$ and role names to binary relations over $\Delta^\I$,
 \item $<$ is an irreflexive and transitive relation over 
$\Delta^\I$ that satisfies the following condition (\textbf{Smoothness Condition}): for all $S 
\subseteq \Delta^\I$ and for all $x \in S$, either $x \in \Min_{<}(S)$ or $\exists y \in 
\Min_{<}(S)$ such that $y < x$, with $\Min_{<}(S)=\{x \in S\mid\not\exists y\in S \text{ s.t. } y < x\}$.
  \end{itemize}
 \end{definition}   
  The operator $\T$ is interpreted as follows: $[\T(A)]^\I=\Min_{<}(A^\I)$. 
For arbitrary concept descriptions, $.^\I$ is inductively extended in the same way as for $\alc$ 
taking into account 
the introduced semantics for $\T$.

 As mentioned in \cite{DBLP:journals/ai/GiordanoGOP13,DBLP:journals/fuin/GiordanoOGP09}, $\logic$ 
is still monotonic and has several limitations. In the following we present the logic 
$\logicmin$, proposed in \cite{DBLP:journals/ai/GiordanoGOP13} as a non-monotonic extension of 
$\logic$, where a preference relation is defined between $\logic$ interpretations and only minimal 
models are considered.

 First, we introduce the modality $\Box$ as in \cite{DBLP:journals/ai/GiordanoGOP13}.

 \begin{definition}
    Let $\I$ be an $\logic$ interpretation and $C$ a concept description. Then, $\Box C$ is interpreted under $\I$ in the following way: \[(\Box C)^\I=\{x \in \Delta^\I 
\mid \text{ for all }y\in\Delta^\I \text{ if }y < x\text{ then }y\in C^\I\}\]
    We remark that $\Box C$ does not extend the syntax of $\logic$. The purpose of using it is to characterize elements of the domain with respect to whether all 
    their predecessors in $<$ are instances of $C$ or not. For example, $\Box \neg \mathsf{Bird}$ defines a concept such that $d \in (\Box \neg \mathsf{Bird})^\I$ if all 
    the predecessors of $d$, with respect to $<$ under the interpretation $\I$, are not instances of $\mathsf{Bird}$. Hence, it is not difficult to see that:
    \[[\T(\mathsf{Bird})]^\I=(\mathsf{Bird} \sqcap \Box\neg \mathsf{Bird})^\I\]
    Then, the idea is to prefer models that minimize the instances of $\neg\Box\neg \mathsf{Bird}$ in order to minimize the number of \emph{atypical} birds. 
 \end{definition}

 Now, let $\mathcal{L}_\T$ be a finite set of concept names occurring in the knowledge base. These are 
the concepts whose atypical instances are meant to be minimized. For each 
interpretation $\I$, the set $\minset{\I}{}$ represents all the instance of concepts of the form 
$\neg \Box \neg A$ for all $A \in \mathcal{L}_\T$. Formally,\[\minset{\I}{} = \{(x,\neg\Box\neg 
A)\mid x \in 
(\neg\Box\neg A)^\I,\text{ with } x \in \Delta^\I, A\in \mathcal{L}_\T\}\]

 Based on this, the notion of minimal models is defined in the following way.

\begin{definition}[Minimal models]\label{def_min_models}
   Let $\kbase{\mathcal{K}}{\tbox}{\abox}$ be a knowledge base and 
$\I=(\Delta^\I,.^\I,<_\I)$, $\J=(\Delta^\J,.^\J,<_\J)$
 be two interpretations. We say that $\I$ is preferred to $\J$ with 
respect to the set $\mathcal{L}_\T$ (denoted as $\I \prefs \J$), iff:
\begin{itemize}
 \item $\Delta^\I=\Delta^\J$,
  \item $a^\I=a^\J$ for all $a \in \iset$,
  \item $\minset{\I}{} \subset \minset{\J}{}$.
  \end{itemize}  
 An interpretation $\I$ is a minimal model of $\mathcal{K}$ with respect to $\mathcal{L}_\T$
(denoted as $\minsat{\I}{\mathcal{K}}{\mathcal{L}_\T}$) iff $\I \models \mathcal{K}$ and 
there is no model $\J$ of $\mathcal{K}$ such that $\J \prefs \I$.
\end{definition} 

 Based on the notion of minimal models, the standard reasoning tasks are defined for $\logicmin$.
 
   \begin{itemize}   
     \item \emph{Knowledge base consistency (or satisfiability):} A knowledge base $\mathcal{K}$ 
is consistent w.r.t. $\mathcal{L}_\T$, if there exists an interpretation $\I$ such that 
$\minsat{\I}{\mathcal{K}}{\mathcal{L}_\T}$.     
     \item \emph{Concept satisfiability:} An extended concept $C_e$ is satisfiable with respect 
to $\mathcal{K}$ if there exists a minimal model $\I$ of $\mathcal{K}$ w.r.t. $\mathcal{L}_\T$ such 
that 
$C_e^\I \neq \emptyset$.
      \item \emph{Subsumption:} Let $C_e$ and $D_e$ be two extended concepts. $C_e$ is subsumed by 
$D_e$ w.r.t. $\mathcal{K}$ and $\mathcal{L}_\T$, denoted as $\minsat{\mathcal{K}}{C_e 
\sqsubseteq D_e}{\mathcal{L}_\T}$, if 
$C_e^\I \subseteq D_e^\I$ for all minimal models $\I$ of $\mathcal{K}$.     
      \item \emph{Instance checking:} An individual name $a$ is an instance of an extended 
concept $C_e$ w.r.t. $\mathcal{K}$, denoted as $\minsat{\mathcal{K}}{C_e(a)}{\mathcal{L}_\T}$, 
if $a^\I \in C_e^{\I}$ in all the minimal models $\I$ of $\mathcal{K}$.
    \end{itemize}  

 Regarding the computational complexity, the case of \emph{knowledge base consistency} is not interesting in itself since the logic $\logic$ enjoys the finite model 
 property \cite{DBLP:journals/ai/GiordanoGOP13}. Note that if there exists a finite model $\I$ of $\mathcal{K}$, then the sets that are being minimized 
 are finite. Therefore, every descending chain starting from $\I$ with respect to $\prefs$ must be 
finite and a minimal model of $\mathcal{K}$ always exists. 
 Thus, the decision problem only requires to decide knowledge base consistency of the underlying monotonic logic $\logic$ which has been shown to be $\expt$-complete \cite{DBLP:journals/fuin/GiordanoOGP09}. 
 For the other reasoning tasks, a $\nexpnp$ upper bound is provided for $\emph{concept satisfiability}$ and a co-$\nexpnp$ upper bound for subsumption and instance checking \cite{DBLP:journals/ai/GiordanoGOP13}.

\section{Extending $\logicmin$ with more typicality operators.}
 
  As already mentioned in \cite{DBLP:journals/ai/GiordanoGOP13,DBLP:journals/fuin/GiordanoOGP09}, the use of a global relation to represent that one individual is more typical 
  than another one, limits the expressive power of the logic. It is not possible to express that an individual $x$ is more typical than an individual $y$ with respect 
  to some aspect $As_1$ and at the same time $y$ is more typical than $x$ (or not comparable to 
$x$) with respect to a different aspect $As_2$. This, for example, implies that a subclass cannot 
inherit 
  any property from a superclass, if the subclass is already exceptional with respect to one property of the superclass. This effect is also known as 
  \emph{property inheritance blocking} \cite{DBLP:conf/tark/Pearl90,DBLP:journals/ai/GeffnerP92}, and 
is a known problem in preferential extensions of DLs based on the KLM approach.
  
   We revisit the example from the introduction to illustrate this problem.
  \begin{example}\label{example_inheritance}
      Consider the following knowledge base:
       \begin{align}
           \mathsf{Penguin} &\sqsubseteq \mathsf{Bird} \notag \\
           \T(\mathsf{Bird}) &\sqsubseteq \mathsf{Fly} \sqcap \mathsf{Winged} \notag \\
           \T(\mathsf{Penguin}) &\sqsubseteq \neg \mathsf{Fly} \notag
       \end{align} Here, \emph{penguins} represent an exceptional subclass of \emph{birds} in the sense that they \emph{usually are unable to fly}. However, it might be intuitive to conclude that 
   they \emph{normally have wings} ($\T(\mathsf{Penguin}) \sqsubseteq \mathsf{Winged}$) since 
although birds fly because they have wings, having wings does not imply the ability to fly. 
   In fact, as said before, it is not possible to sanction this kind of conclusion in $\logicmin$. The problem is that due to the global character of the order $<$ among individuals of the domain, 
   once an element $d$ is assumed to be a \emph{typical} penguin, then automatically a more 
preferred individual $e$ must exist that is a \emph{typical} bird. This rules out the possibility 
to apply
    the non-monotonic assumption represented by the second assertion to $d$.
    \end{example}
    
     In relation with circumscription, this situation can be modelled using abnormality predicates 
to represent exceptionality with respect to different aspects 
\cite{DBLP:journals/ai/McCarthy80,DBLP:journals/ai/McCarthy86}. The following example shows a 
knowledge base which is defined using abnormality concepts similar as the examples 
in \cite{DBLP:journals/jair/BonattiLW09}.
\begin{example}\label{circ_example}
   \begin{align}
          \mathsf{Penguin} &\sqsubseteq \mathsf{Bird} \notag \\
           \mathsf{Bird} &\sqsubseteq \mathsf{Fly} \sqcup Ab_1 \notag \\
           \mathsf{Bird} &\sqsubseteq  \mathsf{Winged} \sqcup Ab_2 \notag \\
           \mathsf{Penguin} &\sqsubseteq \neg \mathsf{Fly} \sqcup Ab_{\mathsf{penguin}} \notag
      \end{align} The semantics of circumscription allows to consider only models that minimize the 
instances of the abnormality concepts. In this example, concepts $Ab_1$ and $Ab_2$ are used to 
represent birds that are atypical with respect to two independent aspects (i.e.: $\mathsf{Fly}$ and 
$\mathsf{Winged}$). If the minimization forces an individual $d$ to be a \emph{not abnormal} 
penguin (i.e.: $d$ is not an instance of $Ab_{\mathsf{penguin}}$), then it must be an instance 
of $Ab_1$, but at the same time nothing forces it to be an instance of $Ab_2$. Therefore, it is 
possible to assume that $d$ has wings because of the minimization of $Ab_2$.
\end{example}     

  In this paper, we follow a suggestion given in \cite{DBLP:journals/ai/GiordanoGOP13} that asks 
for the extension of the logic $\logicmin$ with 
  more preferential relations in order to express typicality of a class with respect to different aspects. We define the logic $\logicplus$ and its extension $\logicplusmin$ 
  in a similar way as for $\logic$ and $\logicmin$, but taking into account the possibility to use more than one typicality operator.
  
  We start by fixing a finite number of typicality operators $\T_1,\ldots,\T_k$. Classical concept 
descriptions and extended concept descriptions are defined by the following 
  syntax:\begin{align}
      C &::= A \: \mid \: \neg C \: \mid \: C \sqcap D \mid \: \exists r.C , 
\notag \\
      C_e &::= C \: \mid \: \T_i(A) \: \mid \: \neg C_e \: \mid \: C_e \sqcap D_e, \notag     
  \end{align} where all the symbols have the same meaning as in $\logic$ and $\T_i$ ranges over the 
set of typicality operators. The semantics is defined as an extension 
  of the semantics for $\logic$ that takes into account the use of more than one $\T$ operator.

\begin{definition}[Interpretations in $\logicplus$]\label{logicplus_int}
   An interpretation $\I$ in $\logicplus$ is a tuple 
$(\Delta^\I,.^\I,<_1,\ldots,<_k)$ where:
\begin{itemize}
 \item $\Delta^\I$ is the domain,
 \item $<_i$ $(1 \leq i \leq k)$ is an irreflexive and transitive relation over 
$\Delta^\I$ satisfying the Smoothness Condition.
  \end{itemize}
  
 Typicality operators are interpreted in the expected way with respect to the different preference 
relations over the domain: $[\T_i(A)]^\I=\Min_{<_i}(A^\I)$.
\end{definition}
 
 Similar as for $\logic$, we introduce for each preference relation $<_i$ an indexed box 
modality $\Box_i$ such that: \[(\Box_i C)^\I=\{x \in \Delta^\I \mid \forall y\in\Delta^\I: \text{ 
if }y <_i x\text{ then }y\in C^\I\}\] Then, the set of typical instances of a concept $A$ with 
respect to the $i^{th}$ typical operator can be expressed in terms of the indexed $\Box$ 
modalities:\[[\T_i(A)]^\I=\{x\in\Delta^\I\mid x\in (A \sqcap 
\Box_i\neg 
A)^\I\}\]

 Now, we define the extension of $\logicplus$ that results in the non-monotonic logic 
$\logicplusmin$. Let $\mathcal{L}_{\T_1},\ldots,\mathcal{L}_{\T_k}$ be $k$ finite 
 sets of concept names. Given an $\logicplus$ interpretation $\I$, the sets $\minset{\I}{i}$ are 
defined as:\[\minset{\I}{i}=\{(x,\neg\Box_i\neg A)\mid x\in(\neg\Box_i\neg A)^\I \land 
A\in\mathcal{L}_{\T_i}\}\] Based on these sets, we define the preference relation $\pref$ on 
$\logicplus$ interpretations that characterizes the non-monotonic semantics of $\logicplusmin$.   

 \begin{definition}[Preference relation]\label{prefrel_def}
    Let $\kbase{\mathcal{K}}{\tbox}{\abox}$ be a knowledge base and 
$\I=(\Delta^\I,.^\I,<_{i_1},\ldots,<_{i_k})$, $\J=(\Delta^\J,.^\J,<_{j_1},\ldots,<_{j_k})$
 be two interpretations. We say that $\I$ is preferred to $\J$ (denoted as $\pref$) with respect to 
the sets $\mathcal{L}_{\T_i}$, iff:
\begin{itemize}
 \item $\Delta^\I=\Delta^\J$,
  \item $a^\I=a^\J$ for all $a \in \iset$,
  \item $\minset{\I}{i} \subseteq \minset{\J}{i}$ for all $1 \leq i \leq k$,
  \item $\exists \ell$ s.t. $\minset{\I}{\ell}\subset\minset{\J}{\ell}$.
\end{itemize}

 An $\logicplus$ interpretation $\I$ is a minimal model of $\mathcal{K}$ (denoted as 
$\minsat{\I}{\mathcal{K}}{\mathcal{L}_{\T^+}}$) iff 
$\I \models \mathcal{K}$ and there exists no interpretation 
$\J$ such that: $\J\models\mathcal{K}$ and $\J\pref\I$. The different reasoning tasks are defined in the usual way, 
but with respect to the new entailment relation $\models_{\mathsf{min}}^{\mathcal{L}_{\T^+}}$.
 \end{definition}
 
  We revise Example \ref{example_inheritance} to show how to distinguish 
between a bird being typical with respect \emph{to being able to fly} or 
  \emph{to having wings}, in $\logicplusmin$. The example shows the use of two typicality operators $\T_1$ and $\T_2$, where $<_1$ and $<_2$ are the underlying preference
   relations.
  \begin{example}
     \begin{align}
          \mathsf{Penguin} &\sqsubseteq \mathsf{Bird} \notag \\
           \T_1(\mathsf{Bird}) &\sqsubseteq \mathsf{Fly} \notag \\
           \T_2(\mathsf{Bird}) &\sqsubseteq  \mathsf{Winged} \notag \\
           \T_1(\mathsf{Penguin}) &\sqsubseteq \neg \mathsf{Fly} \notag
      \end{align} 
      \end{example}
      
  In the example, we use two preference relations to express typicality of birds with respect to two different aspects independently. 
  The use of a second preference relation 
  permits that typical penguins can also be typical birds with respect to $<_2$. Therefore, it is possible to infer that typical penguins do have wings. Looking from 
  the side of individual elements: having the assertion $\mathsf{Penguin}(e)$, the minimal model semantics allows to assume that $e$ is a typical penguin and also a 
  typical bird with respect to $<_2$, even when a bird $d$ must exist such that $d$ is preferred to $e$ with respect to $<_1$.
  
   It is interesting to observe that the defeasible property \emph{not being able to fly}, for 
penguins, is stated with respect to $\T_1$. If instead, we use $\T_2(\mathsf{Penguin}) \sqsubseteq 
\neg \mathsf{Fly}$, there will be minimal models where $e$ is an instance of 
$\T_1(\mathsf{Bird})$ and others where it is an instance of $\T_2(\mathsf{Penguin})$. This implies 
that it will not be possible to infer for $e$, the defeasible properties corresponding to 
the most specific concept it belongs to. 

The same problem is realized, with respect to 
circumscription in Example \ref{circ_example}, where some minimal models prefer $e$ to be a 
\emph{normal} bird ($e \in \neg Ab_1$), while others consider $e$ as a \emph{normal} penguin 
($e \in \neg Ab_\mathsf{penguin}$). To address this problematic about specificity, one 
needs to use priorities between the minimized concepts (or abnormality 
predicates) \cite{DBLP:journals/ai/McCarthy86,DBLP:journals/jair/BonattiLW09}. 

In contrast, for the formulation in the example, the semantics 
induced by the preferential order $<_1$ does not allow to have interpretations where $e \in 
\mathsf{Penguin}$, $e \in \T_1(\mathsf{Bird})$ and $e \not\in \T_1(\mathsf{Penguin})$, i.e., the 
treatment of specificity comes for free in the semantics of the logic.

\section{Complexity of reasoning in $\logicplusmin$.}
 
  In the following, we show that reasoning in $\logicplusmin$ is $\nexpnp$-complete for \emph{concept satisfiability} and $\mathsf{co}$-$\nexpnp$-complete for \emph{subsumption} 
  and \emph{instance checking}. As a main tool we use the close correspondence that exists 
  between \emph{concept-circumscribed} knowledge bases in the DL $\alc$ 
\cite{DBLP:journals/jair/BonattiLW09} and $\logicplusmin$ knowledge bases. In fact, this relation 
has been pointed out in \cite{DBLP:journals/ai/GiordanoGOP13} with respect 
  to the logic $\logicmin$. However, on the one hand, the provided mapping from $\logicmin$ into 
\emph{concept-circumscribed} knowledge bases is not polynomial, and instead a tableaux calculus is 
used to 
show the upper bounds for the main reasoning tasks in $\logicmin$. On the other hand, the relation 
in 
the opposite direction is only given with respect to the logic $\logicOmin$, which extends 
$\logicmin$ by allowing the use of nominals. 
  
  First, we improve the mapping proposed in \cite{DBLP:journals/ai/GiordanoGOP13} by giving a 
simpler polynomial reduction, that translates $\logicplusmin$ knowledge bases into 
\emph{concept-circumscribed} knowledge bases while preserving the entailment relation under the 
translation. Second, we show that using more than one typicality operator, it is possible to reduce 
the problem of concept satisfiability for \emph{concept-circumscribed} knowledge bases in $\alc$, 
into the concept satisfiability problem for $\logicplusmin$.
  
  We start by introducing circumscribed 
knowledge bases in the DL $\alc$, as defined in \cite{DBLP:journals/jair/BonattiLW09}. We obviate 
the use of priorities between minimized predicates.
  
  \begin{definition}
     A circumscribed knowledge base is an expression of the form $\circs{CP}{\tbox}{\abox}$ where $\cpattern{CP}{M}{F}{V}$ is a circumscription pattern such that 
     $M, F, V$ partition the predicates (i.e.: concept and role names) used in $\tbox$ and $\abox$. The set $M$ identifies those concept names whose extension is minimized, 
     $F$ those whose extension must remain fixed and $V$ those that are free to vary. A circumscribed knowledge base where $M\cup F\subseteq \cset$ is called a \emph{concept-circumscribed} 
     knowledge base.
     
     To formalize a semantics for circumscribed knowledge bases, a preference relation $<_{\mathsf{CP}}$ is defined on interpretations by setting $\I <_\mathsf{CP} \J$ iff:
     \begin{itemize}
      \item $\Delta^\I = \Delta^\J$,
      \item $a^\I=a^\J$ for all $a \in \iset$,
      \item $A^\I=A^\J$ for all $A \in F$,
      \item $A^\I \subseteq A^\J$ for all $A \in M$ and there exists an $A' \in M$ such that 
${A'}^\I \subset {A'}^\J$.
     \end{itemize} An interpretation $\I$ is a model of $\circs{CP}{\tbox}{\abox}$ if $\I$ is a model of $(\tbox,\abox)$ and there is no model $\I'$ of $(\tbox,\abox)$
     with $\I' <_{\mathsf{CP}} \I$. The different reasoning tasks can be defined in the same way as above.
  \end{definition}
 
 Similar as for circumscribed knowledge bases in \cite{DBLP:journals/jair/BonattiLW09}, one can show that concept satisfiability, subsumption and instance checking can be polynomially reduced to 
 one another in $\logicplusmin$. However, to reduce instance checking into concept satisfiability slightly different technical details have to be considered.
 
 \begin{lemma}\label{lemm_reasoning_red}
    Let $\kbase{\mathcal{K}}{\tbox}{\abox}$ be an $\logicplus$ knowledge base, $C_e$ an extended 
concept, $\mathcal{L}_{\T_1}, \ldots, \mathcal{L}_{\T_k}$ be finite sets of concept names and $A$ a 
fresh concept name not occurring in $\mathcal{K}$ and $C_e$. Then, 
    $\minsat{\mathcal{K}}{C_e(a)}{\mathcal{L}_{\T^+}}$ iff $\neg \T_{k+1}(A) \sqcap \neg C_e$ is 
unsatisfiable w.r.t. 
    $\kbase{\mathcal{K}'}{\tbox \cup \{\top\sqsubseteq A\}}{\abox \cup \{(\neg \T_{k+1}(A))(a)\}}$, 
where $\mathcal{L}_{\T_{k+1}}=\{A\}$.
 \end{lemma}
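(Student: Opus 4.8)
The plan is to exhibit a tight correspondence between the minimal models of $\mathcal{K}$ and those of $\mathcal{K}'$, under which the fresh operator $\T_{k+1}$ together with $A$ isolates $a$ as the unique witness of $\neg\T_{k+1}(A)$. First I would record the effect of the new apparatus. Since $\top\sqsubseteq A$ forces $A^\I=\Delta^\I$ in every model, we have $[\T_{k+1}(A)]^\I=\Min_{<_{k+1}}(\Delta^\I)$, so $(\neg\T_{k+1}(A))^\I$ is precisely the set of elements that are not minimal with respect to $<_{k+1}$; equivalently $(\neg\T_{k+1}(A))^\I=(\neg\Box_{k+1}\neg A)^\I$, which is in one-to-one correspondence with $\minset{\I}{k+1}$. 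The ABox assertion $(\neg\T_{k+1}(A))(a)$ then forces $a$ to be non-minimal for $<_{k+1}$, so every model of $\mathcal{K}'$ contains the pair $(a,\neg\Box_{k+1}\neg A)$ in its set for $\T_{k+1}$, and in particular has at least two domain elements.

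The main step, and the one I expect to be the principal obstacle, is to prove that in every minimal model $\J$ of $\mathcal{K}'$ one has $(\neg\T_{k+1}(A))^\J=\{a\}$. The key is that $<_{k+1}$ occurs neither in $\mathcal{T}$ nor inside $C_e$, and $A$ is fresh, so $<_{k+1}$ affects neither the satisfaction of $\mathcal{K}$ nor any of the sets $\minset{\J}{i}$ with $i\leq k$; it can therefore be minimized independently of the rest of the model. Concretely, given any model $\J$ of $\mathcal{K}'$, replace $<_{k+1}$ by the relation with the single pair $y<_{k+1}a$ for some $y\neq a$ and no further comparabilities, obtaining $\J'$; this relation is irreflexive, transitive and smooth, $\J'$ still satisfies $(\neg\T_{k+1}(A))(a)$, and $\minset{\J'}{k+1}=\{(a,\neg\Box_{k+1}\neg A)\}$ while $\minset{\J'}{i}=\minset{\J}{i}$ for all $i\leq k$. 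Since $a$ is forced into $\minset{\J}{k+1}$, this singleton cannot be shrunk; hence if $\minset{\J}{k+1}$ properly contains it then $\J'\pref\J$, and every minimal model $\J$ of $\mathcal{K}'$ must satisfy $(\neg\T_{k+1}(A))^\J=\{a\}$. Handling the smoothness condition for the replacement relation and the genuine independence of $<_{k+1}$ from the remaining minimized sets is the delicate part here.

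Next I would set up the correspondence itself. For a minimal model $\J$ of $\mathcal{K}'$ let $\J^-$ be the reduct that forgets $A$ and $<_{k+1}$, and for a minimal model $\I$ of $\mathcal{K}$ let $\I^+$ be the extension with $A^{\I^+}=\Delta^\I$ and the singleton relation $<_{k+1}$ from above. Using independence I would verify: (i) $\J^-$ is a minimal model of $\mathcal{K}$, since any $\I\pref\J^-$ could be re-equipped with the $<_{k+1}$ of $\J$ to produce a model of $\mathcal{K}'$ strictly preferred to $\J$; and (ii) $\I^+$ is a minimal model of $\mathcal{K}'$, since $\minset{\I^+}{k+1}$ is already the forced singleton, so any $\J\pref\I^+$ satisfies $\minset{\J}{k+1}=\minset{\I^+}{k+1}$ and must therefore improve some $\minset{\J}{i}$ with $i\leq k$, whence $\J^-\pref\I$ contradicts minimality of $\I$. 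Because $C_e$ mentions neither $A$ nor $\T_{k+1}$, its value at $a$ agrees in $\J$ and $\J^-$ (and in $\I$ and $\I^+$).

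Finally I would chain these facts. The concept $\neg\T_{k+1}(A)\sqcap\neg C_e$ is satisfiable \wrt $\mathcal{K}'$ iff some minimal model $\J$ of $\mathcal{K}'$ contains a witness, which by $(\neg\T_{k+1}(A))^\J=\{a\}$ must be $a$, \ie $a\in(\neg C_e)^\J$; by the correspondence this holds iff some minimal model of $\mathcal{K}$ places $a$ outside $C_e$, \ie iff $\minsat{\mathcal{K}}{C_e(a)}{\mathcal{L}_{\T^+}}$ fails. Taking contrapositives yields exactly that $\neg\T_{k+1}(A)\sqcap\neg C_e$ is unsatisfiable \wrt $\mathcal{K}'$ iff $\minsat{\mathcal{K}}{C_e(a)}{\mathcal{L}_{\T^+}}$, which is the claim.
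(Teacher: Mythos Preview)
The paper does not actually prove this lemma in the body; it only remarks that the reduction mirrors the analogous one for circumscribed knowledge bases from \cite{DBLP:journals/jair/BonattiLW09} with ``slightly different technical details'' and defers the argument to the long version. So there is nothing to compare your approach against directly. That said, your plan is the natural and correct one: the decoupling of $<_{k+1}$ from the rest of the knowledge base is exactly what makes the fresh operator act as a marker pinning the unique witness of $\neg\T_{k+1}(A)$ to $a$, and your two-way correspondence $\I\leftrightarrow\I^+$, $\J\leftrightarrow\J^-$ between minimal models is the right vehicle for the equivalence.

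One detail deserves attention. In building $\I^+$ from a minimal model $\I$ of $\mathcal{K}$ you set $<_{k+1}=\{(y,a)\}$ for ``some $y\neq a$'', which silently assumes $|\Delta^\I|\geq 2$. If $\Delta^\I=\{a^\I\}$ this step fails, and you cannot satisfy the ABox assertion $(\neg\T_{k+1}(A))(a)$ on that domain. The fix is standard: since a one-point domain forces every $<_i$ to be empty (irreflexivity), all sets $\minset{\I}{i}$ are empty and hence trivially minimal; duplicating the single element to obtain a two-point interpretation (cloning concept and role membership) yields again a model of $\mathcal{K}$ with all $\minset{\cdot}{i}$ empty, hence still minimal over the new domain, and $C_e$ is preserved at $a$ because it mentions neither $A$ nor $\T_{k+1}$. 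After this duplication your construction of $\I^+$ applies verbatim. With that corner case handled, the argument is complete.
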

 
  Note that this reduction requires the introduction of an additional typicality operator 
$\T_{k+1}$. 
Nevertheless, this does not represent a problem in terms of complexity
   since, as it will be shown in the following, the complexity does not depend on the 
number of typicality operators $k$ whenever $k \geq 2$.
 \subsection{Upper Bound.} 
 Before going into the details of the reduction we need to define the notion of a signature.
 \begin{definition}  
    Let $\tset$ be the set of all the concepts of the form $\T_i(A)$ where $A \in \cset$. A 
signature $\Sigma$ for $\logicplus$ is a finite subset of $\cset \cup \rset 
\cup \tset$. We denote by $\alcsigreduct{\Sigma}$ the set $\Sigma \setminus \tset$.
   
    The signature $\sig{C_e}$ of an extended concept $C_e$ is the set of all concept names, role 
names and concepts from $\tset$ that occur in $C_e$. Similarly, the signature $\sig{\mathcal{K}}$ 
of an $\logicplus$ knowledge base $\mathcal{K}$ is the union of 
the signatures of all concept descriptions occurring in $\mathcal{K}$. Finally, we denote by 
$\sig{E_1,\ldots,E_m}$ the set $\sig{E_1} \cup \ldots\cup \sig{E_m}$, where each $E_i$ is either an 
extended concept or a knowledge base.
 \end{definition}

  Let $\kbase{\mathcal{K}}{\tbox}{\abox}$ be an $\logicplus$ knowledge base, 
$\mathcal{L}_{\T_1},\ldots,\mathcal{L}_{\T_k}$ 
 finite sets of concept names and $\Sigma$ be any signature with $\sig{\mathcal{K}} \subseteq 
\Sigma$. A corresponding circumscribed knowledge base 
$\circs{CP}{\tbox'}{\abox'}$, with $\mathcal{K'}=(\tbox',\abox')$, is built in the following way:

 \begin{itemize}  
   \item For every concept $A$ such that it belongs to some set $\mathcal{L}_{\T_i}$ or $\T_i(A) 
\in \Sigma$, a fresh concept name $A_i^*$ is introduced. These concepts are meant to 
represent the atypical elements with respect to $A$ and $<_i$ in $\mathcal{K}$, i.e., $\neg \Box_i 
\neg A$.
   \item Every concept description $C$ defined over $\Sigma$ is transformed into a 
concept $\bar{C}$ by replacing every occurrence of $\T_i(A)$ by $(A \sqcap \neg A_i^*)$.
  \item The TBox $\tbox'$ is built as follows:
     \begin{itemize}
        \item $\bar{C} \sqsubseteq \bar{D} \in \tbox'$ for all $C \sqsubseteq D \in \tbox$,
        \item For each new concept $A_i^*$ the following assertions are included in 
$\tbox'$:\begin{align}
             A_i^* &\equiv \exists r_i.(A \sqcap \neg A_i^*) \\                              
              \exists r_i.A_i^*&\sqsubseteq A_i^* 
         \end{align}             
       where $r_i$ is a fresh role symbol, not occurring in $\Sigma$, introduced to represent the relation $<_i$.
      \end{itemize}      
  \item $\abox'$ results from replacing every assertion of the form $C(a)$ in $\tbox$ by 
the assertion $\bar{C}(a)$.
  \item Let $\mathcal{L}_\T$ be the 
set: \[\bigcup_{j=1}^k 
\bigcup_{A \in \mathcal{L}_{\T_j}} A_j^*\] then, the concept circumscription pattern $\mathsf{CP}$ 
is defined as $\cpattern{CP}{M}{F}{V}=(\mathcal{L}_\T,\emptyset, 
\alcsigreduct{\Sigma} \cup \{A_i^* \mid A_i^* \not\in\mathcal{L}_\T\} \cup \{r_i\mid 1\leq i\leq 
k\})$.
 \end{itemize}
 
  One can easily see that the provided encoding is polynomial in the size of $\mathcal{K}$. The use of the signature $\Sigma$ is just a technical detail and since it is 
  chosen arbitrarily, one can also select it properly for the encoding of the different reasoning tasks.
 
  The idea of the translation is to simulate each order $<_i$ with a relation $r_i$ and at the same 
time fulfill the semantics underlying the $\T_i$ operators. 
  The first assertion, $A_i^* \equiv \exists r_i.(A \sqcap \neg A_i^*)$, intends to express that the 
atypical elements with respect to $A$ and $<_i$ are those, 
  and only those, that have an $r_i$-successor $e$ that is an instance of $A$ and at the same time a 
not atypical $A$, \ie, $e \in \T_i(A)$. Indeed, this is a consequence from 
  the logic $\logicplusmin$ because the order $<_i$ is transitive. However, since it is not possible to enforce transitivity of $r_i$ when translated into $\alc$, we need 
  to use the second assertion $\exists r_i.A_i^* \sqsubseteq A_i^*$. This prevents to have the following situation:\[d \in A_1^*\quad d \in B \sqcap \neg B_1^*\quad(d,e) \in r_1\quad 
  e \in A \sqcap \neg A_1^*\quad e \in B_1^*\] In the absence of assertion $(2)$, this would be 
consistent with respect to $\tbox'$, but it would not satisfy the aim of 
  the translation since the typical $B$-element $d$ would have a predecessor ($r_i$-successor) $e$ 
which is atypical with respect to $B$. In fact, the translation provided 
  in \cite{DBLP:journals/ai/GiordanoGOP13} also deals with this situation, but all the possible cases are asserted explicitly yielding an exponential encoding.
 
  The following auxiliary lemma shows that a model of $(\tbox',\abox')$ can always be transformed into a model, that only differs in the interpretation of $r_i$, and  
  $(r_i)^{-1}$ is irreflexive, transitive and \emph{well-founded}.  
 \begin{lemma}\label{lemma_transform_r}
    Let $\I$ be an $\alc$ interpretation such that $\I \models 
(\tbox',\abox')$. Then, there exists $\J$ such that $\J \models 
(\tbox',\abox')$, $X^{\I}=X^{\J}$ for all $X \in \alcsigreduct{\Sigma} \cup \bigcup A_i^*$, and 
for each $r_i$ we have: ${\left({r_i}^{\J}\right)}^{-1}$ is irreflexive, transitive and 
well-founded.
 \end{lemma}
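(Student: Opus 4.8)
The plan is to leave the domain fixed, leave the interpretation of every concept name (in particular every $A_i^*$) and of every role other than the $r_i$ untouched, and to rebuild each relation $r_i$ independently. This is sound because $r_i$ occurs in $(\tbox',\abox')$ only in the two axioms $A_i^* \equiv \exists r_i.(A \sqcap \neg A_i^*)$ and $\exists r_i.A_i^* \sqsubseteq A_i^*$ attached to index $i$; every other inclusion $\bar{C} \sqsubseteq \bar{D}$ and every assertion $\bar{C}(a)$ is built only from $\alc$ names and the original roles, so it evaluates identically in $\I$ and in any $\J$ differing from $\I$ only on the $r_i$. For $d \in \Delta^\I$ I write $S_i(d) = \{A \mid d \in (A_i^*)^\I\}$; since $\Sigma$ and the sets $\mathcal{L}_{\T_i}$ are finite there are only finitely many starred concepts, so $|S_i(d)|$ is a natural number. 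The key observation, read off from the two axioms holding in $\I$, is that every $r_i^\I$-successor $w$ of $d$ satisfies $S_i(w) \subseteq S_i(d)$ (from $\exists r_i.A_i^* \sqsubseteq A_i^*$) and moreover $w \in A \Rightarrow A \in S_i(d)$ for every starred $A$ (combining both axioms). In particular, a witness $w \in (A \sqcap \neg A_i^*)^\I$ for $d \in A_i^*$ has $A \in S_i(d) \setminus S_i(w)$, hence $S_i(w) \subsetneq S_i(d)$.

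First I would fix, for every $d$ and every $A \in S_i(d)$, one $r_i^\I$-successor $w_d^A \in (A \sqcap \neg A_i^*)^\I$, which exists because the first axiom holds in $\I$. Let $s_i^0$ be the set of all pairs $(d, w_d^A)$ and put $r_i^\J := (s_i^0)^+$, the transitive closure of $s_i^0$, with $\J$ agreeing with $\I$ everywhere else. By the observation above every edge of $s_i^0$ strictly decreases $|S_i|$, and the two inclusions $S_i(w) \subseteq S_i(d)$ and $\{A \mid w \in A\} \subseteq S_i(d)$ compose along paths; hence every pair $(d,e) \in r_i^\J$ still satisfies $S_i(e) \subseteq S_i(d)$, $\{A \mid e \in A\} \subseteq S_i(d)$, and $|S_i(e)| < |S_i(d)|$. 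Consequently $(r_i^\J)^{-1}$ is transitive by construction, irreflexive because no edge can join an element to itself, and well-founded because $|S_i(\cdot)|$ strictly decreases along every $r_i^\J$-edge while ranging over $\mathbb{N}$.

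It then remains to check $\J \models (\tbox', \abox')$. The inclusions coming from $\tbox$ and the ABox assertions are unaffected, as noted. For the first axiom: if $d \in (A_i^*)^\J = (A_i^*)^\I$ then the chosen edge $(d, w_d^A) \in s_i^0 \subseteq r_i^\J$ witnesses $d \in (\exists r_i.(A \sqcap \neg A_i^*))^\J$; conversely, if $d$ has an $r_i^\J$-successor $e \in (A \sqcap \neg A_i^*)^\J$, then $e \in A$ forces $A \in S_i(d)$, i.e.\ $d \in (A_i^*)^\J$. For the second axiom: an $r_i^\J$-successor $e \in (A_i^*)^\J$ gives $A \in S_i(e) \subseteq S_i(d)$, so $d \in (A_i^*)^\J$. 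Since the construction is carried out separately for each $i$ and the $r_i$ do not interact, the resulting $\J$ satisfies $(\tbox', \abox')$, coincides with $\I$ on $\alcsigreduct{\Sigma} \cup \bigcup A_i^*$, and has each $(r_i^\J)^{-1}$ irreflexive, transitive and well-founded. The one delicate point — and the only place the transitive closure could misbehave — is the "only if" direction of the first axiom: enlarging $s_i^0$ to its closure must not create an $r_i^\J$-successor lying in some $A \sqcap \neg A_i^*$ with $A \notin S_i(d)$. This is exactly ruled out by the preservation of the inclusion $\{A \mid e \in A\} \subseteq S_i(d)$ along paths, so I would isolate that preservation as the main sublemma and treat everything else as bookkeeping.
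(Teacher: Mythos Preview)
Your argument is correct. The two key invariants you isolate for an $r_i^\I$-edge $(d,w)$ --- namely $S_i(w)\subseteq S_i(d)$ (from $\exists r_i.A_i^*\sqsubseteq A_i^*$) and $\{A\mid w\in A^\I\}\subseteq S_i(d)$ (from the case split using both axioms) --- are exactly what is needed, and your composition argument is sound: along a path $d=x_0,\ldots,x_m=e$ in $s_i^0$ one gets $\{A\mid e\in A^\I\}\subseteq S_i(x_{m-1})\subseteq\cdots\subseteq S_i(d)$, which handles the delicate $\supseteq$-direction of $A_i^*\equiv\exists r_i.(A\sqcap\neg A_i^*)$ after taking the transitive closure. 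The rank function $|S_i(\cdot)|$ into $\mathbb N$ then gives irreflexivity and well-foundedness of $(r_i^\J)^{-1}$ immediately, and transitivity is by construction. The verification that all other axioms and assertions of $(\tbox',\abox')$ are untouched is also right, since the $r_i$ are fresh and occur only in the two GCIs attached to index $i$.

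The paper does not include its own proof of this lemma in the text you were given (it defers it to the long version), so a direct comparison is not possible. That said, your approach --- choose one witness per $(d,A)$, close transitively, and use the finite ``atypicality type'' $S_i(d)$ as a ranking --- is the natural construction here and is almost certainly what the authors had in mind; any alternative (e.g.\ an unraveling or a filtration argument) would be heavier machinery for the same end. One cosmetic remark: when you write $\{A\mid w\in A\}$ you mean $\{A\mid w\in A^\I\text{ and }A_i^*\text{ is one of the introduced names}\}$; making that explicit would tighten the write-up, but the intended meaning is clear.
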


  Since \emph{well-foundedness} implies the Smoothness Condition, the previous lemma allows us 
to assume (without loss of generality) that ${\left({r_i}^{\I}\right)}^{-1}$ is irreflexive, 
transitive and satisfies the Smoothness Condition for every model $\I$ of $\mathcal{K}'$.

 Now, we denote by $\model{\mathcal{K}}$ the set of models of $\mathcal{K}$ and by 
$\model{\mathcal{K'}}$ the set of models 
 of $\mathcal{K}'$. With the help of the previous lemma, we show that there exists a \emph{one-to-one} correspondence between 
 $\model{\mathcal{K}}$ and $\model{\mathcal{K'}}$. We start by defining a mapping $\varphi$ that 
transforms $\logicplus$ interpretations 
 into $\alc$ interpretations.

\begin{definition}\label{def_mapping}
    We define a mapping $\varphi$ from $\logicplus$ interpretations into $\alc$ interpretations 
such that $\varphi(\I)=\J$ iff:    
    \begin{itemize}
       \item $\Delta^\J = \Delta^\I$,
       \item $X^\J=X^\I$ for each $X \in \alcsigreduct{\Sigma}$,
       \item $(A_i^*)^\J=(\neg\Box_i\neg A)^\I$ for each fresh concept name $A_i^*$,
       \item $(r_i)^\J=(<_i)^{-1}$ for all $i, 1\leq i \leq k$,
       \item $a^\J=a^\I$, for all $a \in \iset$.
    \end{itemize}
\end{definition}

 \begin{remark}
    We stress that interpretations are considered only with respect to concept and role names occurring in $\Sigma$ for $\logicplus$, and 
 $\alcsigreduct{\Sigma} \cup \{A_i^*\} \cup \{r_i\}$ for $\alc$. All the other concept and role names from $\cset$ and $\rset$ are not relevant to distinguish one interpretation 
 from another one. This is, if $\I$ and $\J$ are two $\logicplus$ interpretations, then 
$\I\equiv\J$ iff $X^\I=X^\J$ for all $X \in \Sigma \cap (\cset \cup \rset)$ and 
 $(<_i)^\I=(<_i)^\J$ for all $i, 1\leq i \leq k$. The same applies for $\alc$ interpretations, but with respect to $\alcsigreduct{\Sigma} \cup \{A_i^*\} \cup \{r_i\}$. 
 \end{remark}

 Next, we show that $\varphi$ is indeed a bijection from $\model{\mathcal{K}}$ to $\model{\mathcal{K'}}$.
  
\begin{lemma}\label{lemma_relation}
   The mapping $\varphi$ is a bijection from $\model{\mathcal{K}}$ to $\model{\mathcal{K'}}$, 
such that for every $\I \in \model{\mathcal{K}}$ and each extended concepts $C_e$ defined over 
$\Sigma$: $C_e^\I=(\bar{C_e})^{\varphi(\I)}$.
\end{lemma}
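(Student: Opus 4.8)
The plan is to prove the two claims in the order in which one is needed for the other: first the semantic identity $C_e^\I=(\bar{C_e})^{\varphi(\I)}$, and then, using it, that $\varphi$ restricts to a bijection between $\model{\mathcal{K}}$ and $\model{\mathcal{K'}}$. The identity is the workhorse, since once it is available the transfer of the TBox and ABox conditions between $\I$ and $\varphi(\I)$ becomes almost mechanical. I would prove the identity for an arbitrary $\logicplus$ interpretation $\I$ (not only for models), by structural induction on the extended concept $C_e$ defined over $\Sigma$.

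For the induction, the Boolean cases and the role restriction $\exists r.C$ follow immediately from the induction hypothesis together with the facts that $\varphi$ preserves the domain, the individual names, and $X^\I=X^{\varphi(\I)}$ for every $X\in\alcsigreduct{\Sigma}$. The only interesting base case is $C_e=\T_i(A)$. Here $\overline{\T_i(A)}=A\sqcap\neg A_i^*$ and, by Definition \ref{def_mapping}, $(A_i^*)^{\varphi(\I)}=(\neg\Box_i\neg A)^\I$, hence $(\neg A_i^*)^{\varphi(\I)}=(\Box_i\neg A)^\I$. Together with $A^{\varphi(\I)}=A^\I$ this gives $(A\sqcap\neg A_i^*)^{\varphi(\I)}=(A\sqcap\Box_i\neg A)^\I=[\T_i(A)]^\I$, which is exactly the characterization of $\T_i$ through $\Box_i$ recorded just before Definition \ref{prefrel_def}.

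Next I would show $\varphi$ maps $\model{\mathcal{K}}$ into $\model{\mathcal{K'}}$. For each $C\sqsubseteq D\in\tbox$ and each ABox assertion the translated version holds by the semantic identity, so the substance lies in the two structural axioms for every $A_i^*$. For axiom $(1)$, I unfold both sides under $\varphi(\I)$: the left-hand side is $(\neg\Box_i\neg A)^\I=\{x\mid\exists y<_i x,\ y\in A^\I\}$, while the right-hand side, using the identity for $\overline{\T_i(A)}$ and $(r_i)^{\varphi(\I)}=(<_i)^{-1}$, is $\{x\mid\exists y<_i x,\ y\in\Min_{<_i}(A^\I)\}$. One inclusion is trivial since $\Min_{<_i}(A^\I)\subseteq A^\I$; the converse replaces a predecessor $y\in A^\I$ of $x$ by some $z\in\Min_{<_i}(A^\I)$ with $z<_i y$ or $z=y$ (Smoothness Condition) and then concludes $z<_i x$ by transitivity of $<_i$. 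Axiom $(2)$ reduces directly to transitivity: an $<_i$-predecessor of $x$ lying in $\neg\Box_i\neg A$ has itself an $A$-predecessor, which is then an $A$-predecessor of $x$.

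Finally I would prove bijectivity. Injectivity is immediate, since $\varphi(\I)$ determines $\I$ on the relevant signature ($X^\I=X^{\varphi(\I)}$ for $X\in\alcsigreduct{\Sigma}$ and $(<_i)^\I=((r_i)^{\varphi(\I)})^{-1}$), so $\varphi(\I)\equiv\varphi(\J)$ forces $\I\equiv\J$ in the sense of the Remark. For surjectivity I rely on Lemma \ref{lemma_transform_r} and the convention it justifies, assuming $((r_i)^{\J'})^{-1}$ to be irreflexive, transitive and well-founded for the given model $\J'$ of $\mathcal{K'}$, hence a legitimate preference relation satisfying the Smoothness Condition. I then set $(<_i)^\I=((r_i)^{\J'})^{-1}$ and $X^\I=X^{\J'}$ on $\alcsigreduct{\Sigma}$, so that $\varphi(\I)$ agrees with $\J'$ on every symbol except possibly on the $A_i^*$. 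The remaining and, I expect, hardest step is to verify $(A_i^*)^{\J'}=(\neg\Box_i\neg A)^\I$, equivalently $(A_i^*)^{\J'}=(\exists r_i.A)^{\J'}$. One inclusion follows from axiom $(1)$ because $A\sqcap\neg A_i^*\sqsubseteq A$; for the other I take $x$ with an $A$-predecessor, use well-foundedness and Smoothness of $(<_i)^\I$ to find a $<_i$-minimal $A$-element $z$ below $x$, note that $z$ has no $A$-predecessor and therefore $z\notin(A_i^*)^{\J'}$ (again by axiom $(1)$), so that $z\in(A\sqcap\neg A_i^*)^{\J'}$ witnesses $x\in(A_i^*)^{\J'}$. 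This yields $\varphi(\I)=\J'$, and the semantic identity combined with the forward direction gives $\I\models\mathcal{K}$, completing the argument.
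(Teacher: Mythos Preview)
Your proposal is correct and follows essentially the same route as the paper: establish the semantic identity $C_e^\I=(\bar{C_e})^{\varphi(\I)}$ by structural induction (with $\T_i(A)$ as the only nontrivial base case via $(A_i^*)^{\varphi(\I)}=(\neg\Box_i\neg A)^\I$), verify the two $r_i$-axioms using Smoothness and transitivity of $<_i$, and for surjectivity invoke Lemma~\ref{lemma_transform_r} to assume $((r_i)^{\J'})^{-1}$ is a legitimate preference relation before checking $(A_i^*)^{\J'}=(\neg\Box_i\neg A)^\I$. The only cosmetic differences are that you prove the semantic identity for arbitrary interpretations up front and are more explicit about where Smoothness and transitivity enter; the paper interleaves these steps and leaves the appeal to Smoothness (in the equivalence $d\in(\neg\Box_i\neg A)^\I$ iff some $e<_i d$ lies in $[\T_i(A)]^\I$) implicit.
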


 \begin{proof}
    First, we show that for each $\I \in \model{\mathcal{K}}$ it holds that: $\varphi(\I) \in 
\model{\mathcal{K'}}$. Let $\mathcal{I}=(\Delta^{\mathcal{I}},.^\mathcal{I},<_1,\ldots,<_k)$ be 
a model of $\mathcal{K}$ and assume that $\varphi(\I)=\J$. We observe that since 
$[\T_i(A)]^\I=(A\sqcap \Box_i \neg A)^\I$, then by definition of 
$\varphi$ it follows that:
\begin{equation}\label{eq_typic}
  [\T_i(A)]^\I=(A \sqcap \neg 
A_i^*)^\J  
\end{equation} Consequently, one can also see that for every 
extended concept $C_e$ defined over $\Sigma$ and every element $d \in \Delta^\I$:
\begin{equation}\label{eq_extend}
d \in C_e^\I \text{ iff } d \in (\bar{C_e})^\J\                                                     
\end{equation} This can be shown by a straightforward induction on 
the structure of $C_e$ where the base cases are $A$ and $\T_i(A)$. Hence, it follows that 
$C_e^\I=(\bar{C_e})^\J$ for every extended concept $C_e$ defined over $\Sigma$.

 Now, we show that $\J \models (\tbox',\abox')$. From (\ref{eq_extend}), it is clear that $\J 
\models \bar{C} \sqsubseteq \bar{D}$ for all $\bar{C} 
\sqsubseteq \bar{D} \in \tbox'$. In addition, since $a^\J=a^\I$ for all $a\in\iset$, $\J$ satisfies 
each assertion in $\abox'$. It is left to show that each GCI in 
$\tbox'$ containing an 
occurrence of a fresh role $r_i$ is also satisfied by $\J$. For each $d \in \Delta^\I$ and 
concept name $A_i^*$, it holds:
\begin{align}
   d\in (A_i^*)^\J &\text{iff } d \in (\neg \Box_i \neg A)^\I \notag \\
                   &\text{iff } \exists e \in \Delta^\I \text{ s.t. } e <_i d \text{ and } e \in 
[\T_i(A)]^\I \notag \\
                   &\text{ iff } (d,e) \in (r_i)^{\J} \text{ and } e \in (A \sqcap \neg A_i^*)^\J 
\quad\text{ {by 
(\ref{eq_typic})} } \notag \\
&\text{ iff } d \in (\exists r_i.(A \sqcap \neg A_i^*))^\J \notag
\end{align} The case for the second GCI ($\exists r_i.A_i^* \sqsubseteq A_i^*$) can be shown  in a 
very similar way. Thus, $\J \models (\tbox',\abox')$ and consequently $\varphi$ is a function from $\model{\mathcal{K}}$ into 
$\model{\mathcal{K'}}$.

 Second, we show that for any model $\J$ of $\mathcal{K'}$ (\ie $\J \in \model{\mathcal{K'}}$), 
there exists $\I \in \model{\mathcal{K}}$ with $\varphi(\I)=\J$. Let $\J$ be an arbitrary model of 
$\mathcal{K}'$, we build an $\logicplus$ interpretation 
$\I=(\Delta^{\I},.^{\I},<_1,\ldots,<_k)$ in the following way:
\begin{itemize}
 \item $\Delta^{\I}=\Delta^\J$,
 \item $X^{\I}=X^{\J}$ for each $X \in \alcsigreduct{\Sigma}$,
 \item $<_i={\left({r_i}^{\J}\right)}^{-1}$ for all $i, 1\leq i \leq k$,
 \item $a^{\I}=a^{\J}$, for all $a \in \iset$.
\end{itemize}
Next, we show that $(\neg \Box_i \neg A)^{\I}=(A_i^*)^{\J}$. Assume that 
$d \in (\neg \Box_i \neg A)^{\I}$ for some $d \in \Delta^{\I}$, then there exists $e 
<_i d$ such that $e \in A^{\I}$ and $e \in [\T_i(A)]^{\I}$. This means
  that for all $f <_i e$(or $(e,f) \in r_i^{\J}$): $f \not \in A^{\I}$. Hence, $e \in A^{\J}$ and 
$e \not \in (A_i^*)^{\J}$. All in all, we have $(d,e) \in r_i^\J$ and $e \in (A \sqcap \neg 
A_i^*)^\J$, therefore $d \in (A_i^*)^{\J}$.
  Conversely, assume that $d \in (A_i^*)^{\J}$. Assertion $(1)$ in $\tbox'$ implies that there 
exists $e$ such that $(d,e)\in (r_i)^{\J}$ and $e \in A^{\J}$. By construction of $\I$ we have $e 
<_i d$ and $e \in A^{\I}$. Thus, $d \in (\neg \Box_i \neg 
A)^{\I}$ and we can conclude that $(\neg \Box_i \neg A)^{\I}=(A_i^*)^{\J}$. Having this, it 
follows that $\varphi(\I)=\J$. In addition, similar as for equation 
(\ref{eq_typic}), we have:\begin{equation}\label{eq_typic1}
  [\T_i(A)]^{\I}=(A \sqcap \neg 
A_i^*)^{\J}  
\end{equation} A similar reasoning, as above yields that $\I \models 
\mathcal{K}$. This implies that $\varphi$ is surjective. It is not difficult to see, from the definition of $\varphi$, that it is also injective. 
Thus, $\varphi$ is a bijection from $\model{\mathcal{K}}$ to 
$\model{\mathcal{K'}}$.
 \end{proof}
 
 The previous lemma establishes a \emph{one to one} correspondence between $\model{\mathcal{K}}$ 
and $\model{\mathcal{K'}}$. Then, since $\mathcal{K}$ is an arbitrary $\logicplus$ knowledge base, 
Lemma \ref{lemma_relation} also implies that knowledge base consistency in $\logicplus$ can be 
polynomially reduced to knowledge base consistency in $\alc$, which is $\expt$-complete 
\cite{DBLP:conf/dlog/2003handbook}.
\begin{theorem}
   In $\logicplus$, deciding knowledge base consistency is $\expt$-complete.
\end{theorem}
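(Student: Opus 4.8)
The plan is to derive both bounds as essentially immediate consequences of the model correspondence already established in Lemma~\ref{lemma_relation}, so that almost no new work is required.

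For the upper bound, I would start from an arbitrary $\logicplus$ knowledge base $\kbase{\mathcal{K}}{\tbox}{\abox}$ and apply the translation of the previous section with the concrete choice $\Sigma = \sig{\mathcal{K}}$, producing the $\alc$ knowledge base $\mathcal{K'}=(\tbox',\abox')$, whose size is polynomial in $|\mathcal{K}|$ as already observed. (Since plain consistency does not involve any minimization, the circumscription pattern $\mathsf{CP}$ and the sets $\mathcal{L}_{\T_i}$ are irrelevant here: Lemma~\ref{lemma_relation} relates the full model sets $\model{\mathcal{K}}$ and $\model{\mathcal{K'}}$ independently of which concepts are minimized.) The key observation is that consistency is a purely existential statement about these model sets: $\mathcal{K}$ is consistent iff $\model{\mathcal{K}}\neq\emptyset$, and $\mathcal{K'}$ is $\alc$-satisfiable iff $\model{\mathcal{K'}}\neq\emptyset$. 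Because Lemma~\ref{lemma_relation} exhibits a bijection $\varphi$ between these sets, one is empty exactly when the other is, so $\mathcal{K}$ is consistent iff $\mathcal{K'}$ is $\alc$-satisfiable. As $\alc$ knowledge base consistency is in $\expt$ and the reduction is polynomial, membership in $\expt$ follows. I would stress that the construction stays polynomial uniformly in the number $k$ of typicality operators, since each $<_i$ contributes only the two fixed axioms involving the fresh role $r_i$.

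For the lower bound, I would argue that $\alc$ embeds syntactically into $\logicplus$: every $\alc$ knowledge base is in particular a $\logicplus$ knowledge base that uses no $\T_i$ operator. Given such a knowledge base $\mathcal{K}$, any $\alc$ model can be turned into a $\logicplus$ model by equipping it with arbitrary smooth relations $<_1,\ldots,<_k$ (for instance the empty relations), and conversely the $\alc$-reduct of any $\logicplus$ model of $\mathcal{K}$ is an $\alc$ model; hence $\mathcal{K}$ is $\alc$-consistent iff it is $\logicplus$-consistent. Since $\alc$ knowledge base consistency is already $\expt$-hard, $\expt$-hardness of $\logicplus$ consistency transfers immediately.

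The step requiring the most care is verifying that the bijection genuinely certifies consistency in both directions, and in particular that the surjectivity half of Lemma~\ref{lemma_relation} yields a legitimate $\logicplus$ interpretation. An arbitrary $\alc$ model $\J$ of $\mathcal{K'}$ need not interpret the fresh roles so that $(r_i^\J)^{-1}$ is irreflexive, transitive and smooth, which is exactly what is needed for $<_i := (r_i^\J)^{-1}$ to be an admissible preference relation. This gap is closed by Lemma~\ref{lemma_transform_r}, which allows us to assume without loss of generality that every model of $\mathcal{K'}$ already has well-founded, irreflexive, transitive $(r_i)^{-1}$; well-foundedness implies the Smoothness Condition, so the preimage $\I$ is a genuine $\logicplus$ interpretation. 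Beyond invoking that lemma, nothing further is needed, and the theorem is essentially a corollary of the model correspondence together with the $\expt$-completeness of $\alc$.
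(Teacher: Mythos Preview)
Your proposal is correct and follows essentially the same approach as the paper: the upper bound is obtained from the polynomial translation and the bijection of Lemma~\ref{lemma_relation} (with Lemma~\ref{lemma_transform_r} handling well-behavedness of $(r_i)^{-1}$), reducing $\logicplus$ consistency to $\alc$ consistency, while the lower bound comes from $\alc$ being a syntactic fragment of $\logicplus$. The paper states this more tersely, leaving the hardness direction implicit, but the argument is the same.
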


 In addition, since $\alc$ enjoys the finite model property, this is also the case for $\logicplus$. 
Using the same argument given before for $\logic$ and $\logicmin$, deciding knowledge base 
consistency in $\logicplusmin$ reduces to the same problem with respect to the underlying monotonic 
logic $\logicplus$. Therefore, we obtain the following theorem.

\begin{theorem}\label{theo_consistency}
    In $\logicplusmin$, deciding knowledge base consistency is $\expt$-complete.
 \end{theorem}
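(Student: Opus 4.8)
The plan is to reduce knowledge base consistency in $\logicplusmin$ to knowledge base consistency in the underlying monotonic logic $\logicplus$ and then invoke the preceding theorem, which states that $\logicplus$ consistency is $\expt$-complete. Concretely, I would prove that a $\logicplus$ knowledge base $\mathcal{K}$ admits a minimal model (\ie is consistent as a $\logicplusmin$ knowledge base with respect to the sets $\mathcal{L}_{\T_1},\ldots,\mathcal{L}_{\T_k}$) if and only if it admits a model in the monotonic sense. The forward direction is immediate, since every minimal model is in particular a model of $\mathcal{K}$. The whole argument therefore rests on the converse.

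For the converse I would proceed in two steps. First, I would use the finite model property of $\logicplus$: by Lemma \ref{lemma_relation} the models of $\mathcal{K}$ are in bijection, via $\varphi$, with the models of the $\alc$ knowledge base $\mathcal{K}'$; since $\alc$ enjoys the finite model property and $\varphi$ preserves the domain, a finite $\alc$ model of $\mathcal{K}'$ yields, through $\varphi^{-1}$, a finite $\logicplus$ model $\I_0$ of $\mathcal{K}$. Second, I would show that $\pref$ is well-founded on interpretations sharing the finite domain $\Delta^{\I_0}$. The key observation is that on a finite domain each set $\minset{\I}{i}$ is a finite subset of $\Delta^\I \times \{\neg\Box_i\neg A \mid A \in \mathcal{L}_{\T_i}\}$, so the measure $\mu(\I) = \sum_{i=1}^k |\minset{\I}{i}|$ is a finite non-negative integer. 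Whenever $\J \pref \I$ one has $\minset{\J}{i} \subseteq \minset{\I}{i}$ for every $i$ together with a strict inclusion for at least one index, so $\mu(\J) < \mu(\I)$. Hence every $\pref$-descending chain of models starting at $\I_0$ has length at most $\mu(\I_0)$ and must terminate; its last element is a model of $\mathcal{K}$ to which no model is preferred, \ie a minimal model. This shows $\mathcal{K}$ is $\logicplusmin$-consistent and completes the converse.

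Combining both directions, deciding $\logicplusmin$ consistency coincides with deciding $\logicplus$ consistency, which settles the upper bound in $\expt$. For $\expt$-hardness I would note that setting every $\mathcal{L}_{\T_i}$ to $\emptyset$ makes each $\minset{\I}{i}$ empty and hence every model minimal, so an ordinary $\alc$ knowledge base, whose consistency is already $\expt$-hard, is a special case of a $\logicplusmin$ knowledge base and the lower bound transfers. The main obstacle is the well-foundedness step, but the integer-valued measure $\mu$ reduces it to a routine observation once the finite model property of $\logicplus$ is available, so I expect no serious technical difficulty.
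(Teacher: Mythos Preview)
Your proposal is correct and follows essentially the same route as the paper: use the finite model property of $\logicplus$ (inherited from $\alc$ via the bijection of Lemma~\ref{lemma_relation}) to argue that the finite minimized sets force every $\pref$-descending chain to terminate, so $\logicplusmin$ consistency coincides with $\logicplus$ consistency and the $\expt$-completeness of the latter transfers. Your explicit measure $\mu$ and the separate hardness argument via empty $\mathcal{L}_{\T_i}$ are just more detailed renderings of what the paper sketches in prose.
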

 
 Now, we show that $\varphi$ is not only a bijection from $\model{\mathcal{K}}$ to 
$\model{\mathcal{K'}}$, but it is also \emph{order-preserving} with respect to $\pref$ and 
$<_\mathsf{CP}$.
 
 \begin{lemma}\label{lemma_isomorphism}
    Let $\I$ and $\J$ be two models of $\mathcal{K}$. Then, $\I \pref 
\J$ iff $\varphi(\I) <_{\mathsf{CP}} \varphi(\J)$.
   \begin{proof}
Assume that $\I \pref \J$. Then, for all $A \in 
\mathcal{L}_{T_i}$ we have that $(\neg \Box_i \neg A)^\I \subseteq (\neg \Box_i \neg A)^\J$ and in 
particular, for some $j$ and $A' \in \mathcal{L}_{T_j}$ we have $(\neg \Box_j \neg A')^\I \subset 
(\neg \Box_j \neg A')^\J$. By definition of $\varphi$, we know that $(\neg \Box_i \neg 
A)^\I=(A_i^*)^{\varphi(\I)}$. Hence, for all $A_i^* \in M$ we have that $(A_i^*)^{\varphi(\I)} 
\subseteq (A_i^*)^{\varphi(\J)}$ and $({A'}_j^*)^{\varphi(\I)} \subset ({A'}_j^*)^{\varphi(\J)}$. 
Thus, $\varphi(\I) <_{\mathsf{CP}} \varphi(\J)$. The other direction can be shown in the same way.
   \end{proof}
 \end{lemma}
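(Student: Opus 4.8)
The plan is to show that, under the bijection $\varphi$, each defining condition of $\pref$ matches exactly one defining condition of $<_{\mathsf{CP}}$, so that the two relations are carried onto one another in both directions. Since the pattern was set up with $F = \emptyset$, the clause ``$A^\I = A^\J$ for all $A \in F$'' of $<_{\mathsf{CP}}$ is vacuous and may be ignored. First I would observe that because $\varphi$ leaves the domain and the interpretation of individual names untouched ($\Delta^{\varphi(\I)} = \Delta^\I$ and $a^{\varphi(\I)} = a^\I$ for all $a \in \iset$), the requirements $\Delta^\I = \Delta^\J$ and $a^\I = a^\J$ hold for $\I, \J$ exactly when the corresponding requirements hold for $\varphi(\I), \varphi(\J)$. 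These first two conditions therefore transfer in both directions with no further work.

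The heart of the argument is the translation of the minimized sets. I would record the bookkeeping fact that, for each $i$ and each $A \in \mathcal{L}_{\T_i}$, a pair $(x, \neg \Box_i \neg A)$ lies in $\minset{\I}{i}$ precisely when $x \in (\neg \Box_i \neg A)^\I$, and that by Definition \ref{def_mapping} this set equals $(A_i^*)^{\varphi(\I)}$. Since the elements of $\minset{\I}{i}$ carry the tag $\neg \Box_i \neg A$ as their second component, the inclusion $\minset{\I}{i} \subseteq \minset{\J}{i}$ is equivalent to the family of tagged inclusions $(\neg \Box_i \neg A)^\I \subseteq (\neg \Box_i \neg A)^\J$ ranging over $A \in \mathcal{L}_{\T_i}$, i.e.\ to $(A_i^*)^{\varphi(\I)} \subseteq (A_i^*)^{\varphi(\J)}$ for every $A_i^*$ arising from an $A \in \mathcal{L}_{\T_i}$. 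Because $M = \bigcup_{j} \bigcup_{A \in \mathcal{L}_{\T_j}} A_j^*$, the concept names $A_i^* \in M$ are in exact correspondence with these tags, so the condition ``$\minset{\I}{i} \subseteq \minset{\J}{i}$ for all $i$'' of $\pref$ is equivalent to ``$(A_i^*)^{\varphi(\I)} \subseteq (A_i^*)^{\varphi(\J)}$ for all $A_i^* \in M$'' of $<_{\mathsf{CP}}$.

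For the strict clause I would argue that, once all the non-strict inclusions above are in place, $\minset{\I}{\ell} \subset \minset{\J}{\ell}$ holds for some $\ell$ iff some tagged slice is properly included, i.e.\ iff there are an $A \in \mathcal{L}_{\T_\ell}$ and a witness $x$ with $x \in (\neg \Box_\ell \neg A)^\J \setminus (\neg \Box_\ell \neg A)^\I$; translating through $\varphi$, this is exactly the existence of an $A_\ell^* \in M$ with $(A_\ell^*)^{\varphi(\I)} \subset (A_\ell^*)^{\varphi(\J)}$, the strict clause of $<_{\mathsf{CP}}$. Assembling the transferred domain and individual-name conditions, the inclusion condition, and the strict condition (together with the vacuous $F$-condition) yields $\I \pref \J$ iff $\varphi(\I) <_{\mathsf{CP}} \varphi(\J)$, and every step is reversible, so both directions follow at once. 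I expect the only delicate point to be the decomposition of the set-of-pairs inclusion into the per-concept inclusions and the matching of the index set of $M$ with the tags of the minsets; once the identity $(A_i^*)^{\varphi(\I)} = (\neg \Box_i \neg A)^\I$ from the definition of $\varphi$ is in hand, the rest is a direct substitution.
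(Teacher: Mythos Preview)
Your proposal is correct and follows essentially the same approach as the paper: both arguments hinge on the identity $(\neg\Box_i\neg A)^\I=(A_i^*)^{\varphi(\I)}$ from Definition~\ref{def_mapping} to translate the inclusion and strict-inclusion clauses of $\pref$ into those of $<_{\mathsf{CP}}$. You are simply more explicit than the paper about the domain and individual-name conditions, the vacuous $F$-clause, and the decomposition of the tagged-pair inclusions $\minset{\I}{i}\subseteq\minset{\J}{i}$ into per-concept inclusions, all of which the paper leaves implicit.
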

 
 The following lemma is an easy consequence from the previous one and the fact that $\varphi$ is 
bijection (which implies 
 that $\varphi$ is invertible).

 \begin{lemma}\label{lemma_min_models}
    Let $\I$ and $\J$ be $\logicplus$ and $\alc$ interpretations, respectively. Then, 
    \begin{align}
   \minsat{\I}{\mathcal{K}}{\mathcal{L}_{\T^+}} \:\:&\text{ iff }\:\: \varphi(\I) \models 
\circs{CP}{\tbox'}{\abox'} \tag{a} \\
   \J \models 
\circs{CP}{\tbox'}{\abox'} \:\:&\text{ iff }\:\: \minsat{\varphi^{-1}(\J)}{\mathcal{K}}{\mathcal{L}_{\T^+}} \tag{b}
\end{align}
 \end{lemma}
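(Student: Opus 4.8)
The plan is to obtain both equivalences purely from the two preceding lemmas: Lemma~\ref{lemma_relation}, which states that $\varphi$ is a bijection between $\model{\mathcal{K}}$ and $\model{\mathcal{K'}}$, and Lemma~\ref{lemma_isomorphism}, which states that $\varphi$ is order-preserving, \ie $\I \pref \J$ iff $\varphi(\I) <_{\mathsf{CP}} \varphi(\J)$. The key observation is that minimality on each side is a statement of the form ``is a model, and no strictly smaller model exists'', and both of these ingredients transfer through $\varphi$.

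For part~(a), I would first unfold the definitions. On the left, $\minsat{\I}{\mathcal{K}}{\mathcal{L}_{\T^+}}$ means $\I \in \model{\mathcal{K}}$ together with the non-existence of a $\J \in \model{\mathcal{K}}$ with $\J \pref \I$. On the right, $\varphi(\I) \models \circs{CP}{\tbox'}{\abox'}$ means $\varphi(\I) \in \model{\mathcal{K'}}$ together with the non-existence of a $\J' \in \model{\mathcal{K'}}$ with $\J' <_{\mathsf{CP}} \varphi(\I)$. By Lemma~\ref{lemma_relation} the membership conditions are equivalent, so it remains to match the two minimality clauses, which I would do by contraposition. If some $\J \in \model{\mathcal{K}}$ satisfies $\J \pref \I$, then $\varphi(\J) \in \model{\mathcal{K'}}$ by Lemma~\ref{lemma_relation} and $\varphi(\J) <_{\mathsf{CP}} \varphi(\I)$ by Lemma~\ref{lemma_isomorphism}, so $\varphi(\I)$ is not minimal. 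Conversely, if some $\J' \in \model{\mathcal{K'}}$ satisfies $\J' <_{\mathsf{CP}} \varphi(\I)$, then, using that $\varphi$ is onto, I write $\J' = \varphi(\J)$ with $\J = \varphi^{-1}(\J') \in \model{\mathcal{K}}$; Lemma~\ref{lemma_isomorphism} then yields $\J \pref \I$, so $\I$ is not minimal. Hence a $\pref$-smaller model of $\I$ exists iff a $<_{\mathsf{CP}}$-smaller model of $\varphi(\I)$ exists, which establishes~(a).

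Part~(b) follows immediately by instantiating part~(a) at $\I = \varphi^{-1}(\J)$. Since $\varphi$ is a bijection, every $\J \in \model{\mathcal{K'}}$ equals $\varphi(\varphi^{-1}(\J))$ with $\varphi^{-1}(\J) \in \model{\mathcal{K}}$, so (a) gives $\minsat{\varphi^{-1}(\J)}{\mathcal{K}}{\mathcal{L}_{\T^+}}$ iff $\varphi(\varphi^{-1}(\J)) = \J \models \circs{CP}{\tbox'}{\abox'}$, which is exactly~(b). I do not anticipate any genuine difficulty here; the only point that needs care is the use of surjectivity of $\varphi$ in the converse direction of~(a), which guarantees that the candidate ``better'' models on the $\alc$ side are precisely the $\varphi$-images of $\logicplus$ models of $\mathcal{K}$, so that the two minimality conditions quantify over corresponding sets.
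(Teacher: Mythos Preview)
Your proposal is correct and matches the paper's approach exactly: the paper does not spell out a proof but simply notes that the lemma ``is an easy consequence from the previous one and the fact that $\varphi$ is a bijection (which implies that $\varphi$ is invertible)'', and your argument is precisely the natural unfolding of that remark using Lemmas~\ref{lemma_relation} and~\ref{lemma_isomorphism}.
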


 Thus, we have a correspondence between minimal models of $\mathcal{K}$ and 
models of $\circs{CP}{\tbox'}{\abox'}$. Based on this, it is easy to reduce each reasoning task 
from $\logicplusmin$ into the equivalent task with respect to \emph{concept-circumscribed} 
knowledge bases. 
The following lemma states the existence of such a reduction for concept satisfiability, the cases 
 for subsumption and instance checking can be proved in a very similar way.
 
 \begin{lemma}\label{lemma_upper_bound}
    An extended concept $C_0$ is satisfiable \wrt to $\mathcal{K}$ and 
$\mathcal{L}_{\T_1},\ldots,\mathcal{L}_{\T_k}$ iff $\bar{C_0}$ is satisfiable in 
$\circs{CP}{\tbox'}{\abox'}$.
 \end{lemma}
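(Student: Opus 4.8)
The plan is to read the statement off the correspondence that has already been assembled, namely the extension-preserving bijection of Lemma \ref{lemma_relation} together with the minimal-model correspondence of Lemma \ref{lemma_min_models}. Before starting, I would fix the signature used in the construction of $\circs{CP}{\tbox'}{\abox'}$. Since $\Sigma$ may be chosen arbitrarily as long as $\sig{\mathcal{K}} \subseteq \Sigma$, I take $\Sigma \supseteq \sig{\mathcal{K}} \cup \sig{C_0}$. This guarantees that $C_0$ is an extended concept defined over $\Sigma$, so that its translation $\bar{C_0}$ is well defined and, crucially, Lemma \ref{lemma_relation} applies to $C_0$ itself and not merely to the concepts occurring in $\mathcal{K}$; concretely, $C_0^\I = (\bar{C_0})^{\varphi(\I)}$ for every $\I \in \model{\mathcal{K}}$.

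For the forward direction, suppose $C_0$ is satisfiable \wrt $\mathcal{K}$ and $\mathcal{L}_{\T_1},\ldots,\mathcal{L}_{\T_k}$. By definition there is a minimal model $\I$ with $\minsat{\I}{\mathcal{K}}{\mathcal{L}_{\T^+}}$ and $C_0^\I \neq \emptyset$. Lemma \ref{lemma_min_models}(a) then gives $\varphi(\I) \models \circs{CP}{\tbox'}{\abox'}$, and the extension identity from Lemma \ref{lemma_relation} yields $(\bar{C_0})^{\varphi(\I)} = C_0^\I \neq \emptyset$. Hence $\varphi(\I)$ witnesses that $\bar{C_0}$ is satisfiable in the concept-circumscribed knowledge base.

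For the converse, suppose $\bar{C_0}$ is satisfiable in $\circs{CP}{\tbox'}{\abox'}$, witnessed by a model $\J$ with $(\bar{C_0})^\J \neq \emptyset$. Setting $\I = \varphi^{-1}(\J)$, Lemma \ref{lemma_min_models}(b) gives $\minsat{\I}{\mathcal{K}}{\mathcal{L}_{\T^+}}$, and since $\varphi(\I) = \J$, Lemma \ref{lemma_relation} gives $C_0^\I = (\bar{C_0})^\J \neq \emptyset$. Thus $\I$ is a minimal model of $\mathcal{K}$ satisfying $C_0$, so $C_0$ is satisfiable in $\logicplusmin$.

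Honestly, all the real work has been front-loaded into Lemmas \ref{lemma_relation}, \ref{lemma_isomorphism} and \ref{lemma_min_models}: the bijectivity of $\varphi$, its order-preservation between $\pref$ and $<_{\mathsf{CP}}$, and the resulting identification of minimal models of $\mathcal{K}$ with models of the circumscribed knowledge base. The present statement is then essentially bookkeeping, and the two directions are symmetric. The only point that requires a moment's care---and the one I would flag explicitly---is the signature choice above: if $\Sigma$ failed to contain $\sig{C_0}$, the translation $\bar{C_0}$ and the extension identity $C_0^\I = (\bar{C_0})^{\varphi(\I)}$ could break for the very concept whose satisfiability is at issue. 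With that proviso in place, the reduction follows immediately, and the analogous reductions for subsumption and instance checking go through by the same argument applied to the respective reasoning tasks.
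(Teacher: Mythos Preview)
Your proof is correct and follows essentially the same approach as the paper: fix $\Sigma$ to contain $\sig{\mathcal{K}} \cup \sig{C_0}$, then use Lemma~\ref{lemma_min_models} to pass between minimal models of $\mathcal{K}$ and models of $\circs{CP}{\tbox'}{\abox'}$, and Lemma~\ref{lemma_relation} to transfer the extension of $C_0$ to that of $\bar{C_0}$. The paper's own proof is terser (it dispatches the converse with ``similar, but using $\varphi^{-1}$''), but your elaboration of that direction and your explicit flag on the signature choice are exactly the points the paper relies on.
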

 \begin{proof}
     Let us define $\Sigma$ as $\sig{\mathcal{K},C_0}$.
     
    $(\Rightarrow)$ Assume that $\I$ is a minimal model of $\mathcal{K}$ with $C_0^\I \neq 
\emptyset$. The application of Lemma \ref{lemma_min_models} tells us that $\varphi(\I) 
\models \circs{CP}{\tbox'}{\abox'}$. In addition, from Lemma \ref{lemma_relation} we have that 
$C_0^\I=(\bar{C_0})^{\varphi(\I)}$. Thus, $\bar{C_0}$ is satisfiable in 
$\circs{CP}{\tbox'}{\abox'}$.
     $(\Leftarrow)$ The argument is similar, but using $\varphi^{-1}$.
     \end{proof}
    Finally, from the complexity results proved in \cite{DBLP:journals/jair/BonattiLW09} for the 
different reasoning tasks with respect to \emph{concept-circumscribed} knowledge 
    bases in $\alc$, we obtain the following upper bounds.
    
    \begin{theorem}
       In $\logicplusmin$, it is in $\nexpnp$ to decide concept satisfiability and in co-$\nexpnp$ to decide subsumption and instance checking.
    \end{theorem}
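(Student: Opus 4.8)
The plan is to derive the upper bounds by transferring the complexity results for \emph{concept-circumscribed} knowledge bases in $\alc$ (established in \cite{DBLP:journals/jair/BonattiLW09}) through the polynomial reduction developed above. The reduction from reasoning tasks in $\logicplusmin$ to reasoning tasks on $\circs{CP}{\tbox'}{\abox'}$ is already in place: Lemma \ref{lemma_upper_bound} handles concept satisfiability, and the statement there notes that subsumption and instance checking can be reduced analogously. Since concept satisfiability for concept-circumscribed knowledge bases in $\alc$ is known to be decidable in $\nexpnp$, and subsumption and instance checking are in co-$\nexpnp$, the upper bounds for $\logicplusmin$ will follow immediately once the reductions are shown to be polynomial and faithful.

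First I would record that the construction of $\circs{CP}{\tbox'}{\abox'}$ from $\mathcal{K}$, together with the chosen signature $\Sigma = \sig{\mathcal{K}, C_0}$, is polynomial in the size of the input: this was already observed right after the definition of the translation. Next I would invoke Lemma \ref{lemma_upper_bound} to conclude that $C_0$ is satisfiable \wrt $\mathcal{K}$ and $\mathcal{L}_{\T_1},\ldots,\mathcal{L}_{\T_k}$ if and only if $\bar{C_0}$ is satisfiable in the circumscribed knowledge base. Composing the polynomial reduction with the $\nexpnp$ decision procedure for concept satisfiability over concept-circumscribed $\alc$ knowledge bases yields the $\nexpnp$ upper bound for concept satisfiability in $\logicplusmin$.

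For subsumption and instance checking I would appeal to the analogous reductions alluded to in Lemma \ref{lemma_upper_bound}, each expressing the $\logicplusmin$ task as the corresponding task over $\circs{CP}{\tbox'}{\abox'}$. Concretely, subsumption $\minsat{\mathcal{K}}{C_e \sqsubseteq D_e}{\mathcal{L}_{\T^+}}$ reduces to deciding subsumption $\bar{C_e} \sqsubseteq \bar{D_e}$ in the circumscribed knowledge base, and instance checking reduces similarly using the barred translation of the relevant assertion; the faithfulness of both follows from the order-preserving bijection of Lemma \ref{lemma_min_models} together with the concept-equality $C_e^\I = (\bar{C_e})^{\varphi(\I)}$ of Lemma \ref{lemma_relation}. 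Since these tasks are co-$\nexpnp$ for concept-circumscribed $\alc$ knowledge bases, the co-$\nexpnp$ upper bounds transfer to $\logicplusmin$.

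The only real point requiring care is the faithfulness of the reductions rather than their complexity: one must verify that the arbitrary choice of $\Sigma$ (taken as the signature of the knowledge base together with the query concept) does not affect the outcome of the reasoning task, so that minimal models of $\mathcal{K}$ correspond exactly to models of $\circs{CP}{\tbox'}{\abox'}$ relevant to the query. This is precisely what Lemmas \ref{lemma_relation}, \ref{lemma_isomorphism}, \ref{lemma_min_models} and \ref{lemma_upper_bound} have already secured, so no additional obstacle remains; the theorem is essentially a corollary assembling these pieces with the external complexity bounds of \cite{DBLP:journals/jair/BonattiLW09}.
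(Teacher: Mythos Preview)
Your proposal is correct and matches the paper's approach exactly: the theorem is obtained as an immediate corollary of the polynomial reduction established in Lemmas \ref{lemma_relation}--\ref{lemma_upper_bound} together with the complexity results for concept-circumscribed $\alc$ knowledge bases from \cite{DBLP:journals/jair/BonattiLW09}. The paper itself does not give a separate proof beyond the sentence preceding the theorem, so your write-up is, if anything, more detailed than the original.
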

 \subsection{Lower Bound.}
 
 To show the lower bound, we reduce the problem of concept satisfiability with respect to 
\emph{concept-circumscribed} knowledge 
bases in $\alc$, into the concept satisfiability problem in $\logicplusmin$. It is enough to consider 
\emph{concept-circumscribed} knowledge bases of the form $\circs{CP}{\tbox}{\abox}$ with 
$\cpattern{CP}{M}{F}{V}$ where $\abox=\emptyset$ and $F=\emptyset$. The problem of deciding concept satisfiability for this class of circumscribed 
knowledge bases has been shown to be \nexpnp-hard for $\alc$ \cite{DBLP:journals/jair/BonattiLW09}. 
In order to do that, we modify the reduction provided 
in \cite{DBLP:journals/ai/GiordanoGOP13} which shows \nexpnp-hardness for concept satisfiability in 
$\logicOmin$. 

 Before going into the details, we assume without loss of generality that each minimized concept 
occurs in the knowledge base:
\begin{remark}
   Let $\circs{CP}{\tbox}{\abox}$ be a circumscribed knowledge base. If $A \in M$ and $A$ does not occur in $(\tbox,\abox)$, 
   then for each model $\I$ of $\circs{CP}{\tbox}{\abox}$: $A^\I=\emptyset$.
\end{remark}
 Given a circumscribed knowledge base $\mathcal{K}=\circs{CP}{\tbox}{\abox}$ (where 
$\mathsf{CP}$ is of the previous form) 
 and a concept description $C_0$, we define a corresponding $\logicplus$ knowledge base 
$\kbase{\mathcal{K}'}{\tbox'}{\abox'}$ using \textbf{two} typicality operators in the following way.

Let $M$ be the set $\{M_1,\ldots,M_q\}$. Similarly as in \cite{DBLP:journals/ai/GiordanoGOP13}, individual names $c$ and 
$c_{m_i}$ (one for each $M_i \in M$) and a fresh concept name $D$ are introduced.  Each $\alc$ concept description $C$ is transformed into $C^*$ 
inductively by introducing $D$ into concept descriptions of the form $\exists r.C_1$, i.e.: 
$(\exists r.C_1)^*=\exists r.(D \sqcap 
C_1^*)$ (see \cite{DBLP:journals/ai/GiordanoGOP13} for precise details). 

 Similar as in \cite{DBLP:journals/ai/GiordanoGOP13}, we start by adding the following GCIs to the 
TBox $\tbox'$:\begin{align}
             D \sqcap C_1^* &\sqsubseteq C_2^* \text{ if } C_1 \sqsubseteq C_2 \in \tbox \\                              
             D \sqcap M_i &\sqsubseteq \neg \T_1(M_i) \text{ for all } M_i \in M 
         \end{align} The purpose of using these subsumption statements is to establish a 
correspondence between the minimized concept names $M_i$, from the circumscription side, with the 
underlying concepts 
 $\neg \Box_1 \neg M_i$ on the $\logicplusmin$ side, such that the minimization of the $M_i$ 
concepts can be simulated by the minimization of $\neg \Box_1 \neg M_i$. The 
 individual names $c_{m_i}$ are introduced to guarantee the existence of typical $M_i$'s in view of 
assertion $(7)$. The concept $D$ plays the role to distinguish the 
 elements of the domain that are not mapped to those individual names by an interpretation. 
 
  Note that if under an interpretation $\I$ an element $d$ is an instance of $D$ and $M_i$ at the 
same time, then it has to be an instance of $\neg\T_1(M_i)$ and therefore an instance of 
$\neg\Box_1\neg M_i$ as well. Hence, it is important that whenever $d$ becomes an instance of 
$\Box_1 \neg M_i$ in a preferred interpretation to $\I$, it happens because $d$ becomes an instance 
of $\neg M_i$ while it is still an instance of $D$. In order to force this effect during the 
minimization, the interpretation of the concept $D$ should remain fixed in some way. As pointed out 
in \cite{DBLP:journals/ai/GiordanoGOP13}, this
  seems not to be possible in $\logicmin$ and that is why the reduction is realized for $\logicOmin$ 
where nominals are used with that purpose.
  
  In contrast, for $\logicplusmin$ this effect on $D$ can be simulated by introducing a second typicality operator $\T_2$, setting $\mathcal{L}_{\T_1}=M, \mathcal{L}_{\T_2}=\{A\}$ and adding
   the following two assertions to $\tbox'$:\begin{align}
             \top &\sqsubseteq A \\                              
             \neg D &\sqsubseteq \neg \T_2(A) 
         \end{align} where $A$ is a fresh concept name. Note that if an element $d$ becomes a 
$(\neg D)$-element, it automatically becomes a $(\neg \Box_2 \neg A)$-element.

 The ABox $\abox'$ contains the following assertions: 
  \begin{itemize}
   \item $D(c)$,
   \item for each $M_i \in M$:
       \begin{itemize}
        \item $(\neg D)(c_{m_i})$,
        \item $(\T_1(M_i))(c_{m_i})$,
        \item $(\neg M_j)(c_{m_i})$ for all $j \neq i$.
       \end{itemize}
  \end{itemize} 
  
  Finally, a concept description $C_0'$ is defined as $D \sqcap C_0^*$.
    
  \begin{lemma}\label{reduction_lemma}
     $C_0$ is satisfiable in $\circs{CP}{\tbox}{\abox}$ iff $C_0'$ is 
satisfiable \wrt $\kbase{\mathcal{K}'}{\tbox'}{\abox'}$ in $\logicplusmin$.
  \end{lemma}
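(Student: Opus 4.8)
The plan is to prove Lemma~\ref{reduction_lemma} by establishing both directions of the biconditional through a model-transformation argument, leveraging the bijection $\varphi$ and the correspondence between minimal models of $\mathcal{K}'$ and models of the associated circumscribed knowledge base (Lemmas~\ref{lemma_relation}--\ref{lemma_min_models}). Since $\mathcal{K}'$ is itself an $\logicplus$ knowledge base, its minimal models correspond (via $\varphi$) to models of some circumscribed knowledge base $\circs{CP}{\tbox''}{\abox''}$. The key is to show that satisfiability of $C_0'$ in these minimal models matches satisfiability of $C_0$ in the \emph{original} circumscribed knowledge base $\circs{CP}{\tbox}{\abox}$, even though the two circumscription patterns differ. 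The central insight driving the construction is that, over the $D$-part of the domain, the auxiliary concept $A_{i}^*$ (i.e., $\neg\Box_1\neg M_i$) tracks exactly the minimized concept $M_i$, so minimizing $\neg\Box_1\neg M_i$ simulates minimizing $M_i$; and the second operator $\T_2$ with $\mathcal{L}_{\T_2}=\{A\}$ forces $D$ to behave as a (pseudo-)fixed predicate.

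First I would prove the forward direction $(\Rightarrow)$. Starting from a model $\I$ of $\circs{CP}{\tbox}{\abox}$ with $C_0^\I\neq\emptyset$, I construct an $\logicplus$ interpretation $\J$ by: keeping the domain and the interpretation of ordinary concept/role names; placing all original domain elements inside $D^\J$; adding the fresh individuals $c, c_{m_i}$ with $c\in D^\J$ and $c_{m_i}\notin D^\J$ as dictated by $\abox'$; and defining the orders $<_1,<_2$ so that $(\neg\Box_1\neg M_i)^\J$ coincides with $M_i^\J$ on the $D$-elements and $(\neg\Box_2\neg A)^\J$ coincides with $(\neg D)^\J$. Concretely, each $D$-element that is an $M_i$-instance needs an $<_1$-predecessor that is a typical $M_i$ (the witness $c_{m_i}$ serves this role, which is why these individuals are introduced), and each $(\neg D)$-element needs an $<_2$-predecessor that is a typical $A$. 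I would then verify that $\J\models\mathcal{K}'$ (checking GCIs $(5)$--$(9)$ and the ABox assertions), that $\J$ is \emph{minimal} by appealing to the minimality of $\I$ under $<_\mathsf{CP}$ through Lemma~\ref{lemma_min_models}, and that $C_0'=D\sqcap C_0^*$ is satisfied at the witnessing element, using that the $(\cdot)^*$ translation restricts existentials to $D$-elements so that $d\in C_0^\I$ iff $d\in (C_0^*)^\J$ for $D$-elements $d$.

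For the converse $(\Leftarrow)$, I would take a minimal model $\J$ of $\mathcal{K}'$ witnessing $C_0'\neq\emptyset$ and extract an $\alc$ interpretation by restricting to $D^\J$ and reading off $M_i^\I := (\neg\Box_1\neg M_i)^\J\cap D^\J$. The crucial verification is that this restricted interpretation is a \emph{minimal} model of $\circs{CP}{\tbox}{\abox}$. Here I would argue that the minimality of $\J$ forces $D^\J$ to be as large as possible and the $\neg\Box_1\neg M_i$ concepts as small as possible on $D$: GCI $(6)$ guarantees that every $D$-element in $M_i$ is an atypical $M_i$ (hence in $\neg\Box_1\neg M_i$), while GCIs $(8)$--$(9)$ via $\T_2$ ensure that any shrinking of $D$ would increase $\neg\Box_2\neg A$, contradicting minimality, so $D$ effectively stays fixed. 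This pins down the correspondence between minimizing $M_i$ and minimizing $\neg\Box_1\neg M_i$. I would then conclude $C_0$ is satisfiable using $d\in(C_0^*)^\J$ iff $d\in C_0^\I$ for $d\in D^\J$.

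The main obstacle I anticipate is the $(\Leftarrow)$ direction, specifically showing that $D$ genuinely behaves as a fixed predicate under minimization using only the second typicality operator $\T_2$ rather than nominals. I must rule out ``cheating'' minimal models of $\mathcal{K}'$ in which some element leaves $D$ in order to reduce the $\neg\Box_1\neg M_i$ extensions, thereby producing a spurious minimal model with no counterpart on the circumscription side. The argument hinges on the interaction between the two preference relations: assertions $(8)$--$(9)$ make any $(\neg D)$-element an instance of $\neg\Box_2\neg A$, so the $\mathcal{L}_{\T_2}$-component of the minimization penalizes shrinking $D$, while the definition of $\pref$ requires $\subseteq$-containment in \emph{every} component simultaneously with strict containment in at least one. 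I would need to argue carefully that no model can strictly decrease the $\T_1$-component at the cost of increasing the $\T_2$-component and still be $\pref$-preferred, which is precisely where the componentwise structure of $\pref$ (as opposed to a single global order) does the work that nominals did in the $\logicOmin$ reduction of \cite{DBLP:journals/ai/GiordanoGOP13}.
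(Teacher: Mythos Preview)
The paper defers the proof of Lemma~\ref{reduction_lemma} to its long version, so there is no in-paper argument to compare against directly. Your overall strategy---a direct model transformation in each direction, adapting the $\logicOmin$ reduction of \cite{DBLP:journals/ai/GiordanoGOP13} with the second typicality operator $\T_2$ replacing nominals---is precisely the intended one, and your identification of the key mechanism is correct: GCIs~$(8)$--$(9)$ force every $\neg D$-element into $(\neg\Box_2\neg A)$, and since $\pref$ demands componentwise $\subseteq$ in \emph{both} $\minset{}{1}$ and $\minset{}{2}$, no preferred model can shrink $D$. This is exactly how the construction avoids nominals.

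Two points need tightening. First, the appeal to the bijection $\varphi$ and Lemma~\ref{lemma_min_models} for establishing minimality of $\J$ is misplaced: those results concern the \emph{upper-bound} translation from $\logicplus$ to circumscription, and applying them to $\mathcal{K}'$ would produce a circumscribed knowledge base different from the original $\circs{CP}{\tbox}{\abox}$. You cannot invoke Lemma~\ref{lemma_min_models} to transfer minimality of $\I$ to $\J$; you must do it directly by assuming $\J'\pref\J$ and extracting $\I'<_{\mathsf{CP}}\I$ (which your sketch in fact already outlines---just drop the reference to Lemma~\ref{lemma_min_models}).

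Second, in the $(\Leftarrow)$ direction, setting $M_i^\I:=(\neg\Box_1\neg M_i)^\J\cap D^\J$ rather than $M_i^\I:=M_i^\J\cap D^\J$ creates an extra obligation. GCI~$(6)$ only guarantees that the \emph{literal} restriction of $\J$ to $D^\J$ satisfies $\tbox$ (via the $(\cdot)^*$ translation), so to conclude $\I\models\tbox$ with your definition you must first show that the two candidate extensions coincide in a minimal model. This does hold---if some $e\in D^\J$ lay in $(\neg\Box_1\neg M_i)^\J\setminus M_i^\J$, one could redefine $<_1$ by putting $c_{m_j}^\J<_1 f$ exactly when $f\in M_j^\J\cap D^\J$, yielding a strictly $\pref$-preferred model of $\mathcal{K}'$---but you should either make this step explicit or take the simpler definition $M_i^\I:=M_i^\J\cap D^\J$ from the outset.
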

  
   \begin{proof}
       Details of the proof are deferred to the long version of the paper.
   \end{proof}

   Since the size of $\mathcal{K}'$ is polynomial with respect to the size of $\mathcal{K}$, the application of the previous lemma yields the following result.  
  \begin{theorem}\label{theo_hard}
     In $\logicplusmin$, concept satisfiability is $\nexpnp$-hard.
  \end{theorem}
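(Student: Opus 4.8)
The plan is a standard many-one hardness argument: compose a polynomial reduction with an already-known lower bound. The starting point is the result of Bonatti, Lutz and Wolter \cite{DBLP:journals/jair/BonattiLW09} that concept satisfiability for \emph{concept-circumscribed} knowledge bases in $\alc$ is $\nexpnp$-hard, and in fact remains so for the restricted class in which $\abox=\emptyset$ and $F=\emptyset$; as noted in the text, reducing from this restricted class suffices. So I would take an arbitrary instance $(\circs{CP}{\tbox}{\abox},C_0)$ with $\cpattern{CP}{M}{F}{V}$ of this restricted form, and map it to the $\logicplus$ instance $(\kbase{\mathcal{K}'}{\tbox'}{\abox'},C_0')$ produced by the construction preceding Lemma \ref{reduction_lemma}, using the two typicality operators $\T_1,\T_2$ with $\mathcal{L}_{\T_1}=M$ and $\mathcal{L}_{\T_2}=\{A\}$, and with target concept $C_0'=D\sqcap C_0^*$.

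First I would verify that this encoding is computable in polynomial time, since that is the only genuinely new thing the theorem statement requires beyond its supporting lemmas. The $(\cdot)^*$ transformation merely inserts the fresh concept $D$ beneath each existential restriction, so $C^*$ has size linear in that of $C$. The TBox $\tbox'$ contains one GCI $D\sqcap C_1^*\sqsubseteq C_2^*$ per GCI of $\tbox$, together with the $|M|$ axioms $D\sqcap M_i\sqsubseteq\neg\T_1(M_i)$ and the two fixed axioms $\top\sqsubseteq A$ and $\neg D\sqsubseteq\neg\T_2(A)$; the ABox $\abox'$ contributes $O(|M|^2)$ assertions via the individuals $c$ and the $c_{m_i}$. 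Hence $|\mathcal{K}'|+|C_0'|$ is polynomial in $|\mathcal{K}|+|C_0|$, as already observed in the text.

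Correctness is then exactly Lemma \ref{reduction_lemma}, which I may assume: $C_0$ is satisfiable in $\circs{CP}{\tbox}{\abox}$ iff $C_0'$ is satisfiable \wrt $\mathcal{K}'$ in $\logicplusmin$. Composing the polynomial map with the $\nexpnp$-hardness of the source problem yields $\nexpnp$-hardness of concept satisfiability in $\logicplusmin$, which is the claim. The hard part of the whole argument does not live in the theorem but is precisely what Lemma \ref{reduction_lemma} hides: one must show that minimizing each abnormality concept $M_i$ under $<_\mathsf{CP}$ is faithfully mirrored by minimizing $\neg\Box_1\neg M_i$ under $\pref$, and — the delicate point flagged in the discussion — that the extension of $D$ stays effectively fixed across $\pref$, which is forced by the second operator via $\top\sqsubseteq A$ and $\neg D\sqsubseteq\neg\T_2(A)$ (any element dropping out of $D$ immediately enters $\neg\Box_2\neg A$, so it cannot produce a $\pref$-smaller model). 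For the theorem itself, however, the remaining obstacle is only the routine polynomiality check above.
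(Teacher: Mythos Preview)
Your proposal is correct and matches the paper's argument essentially verbatim: the paper derives Theorem~\ref{theo_hard} in one line from Lemma~\ref{reduction_lemma} together with the observation that $\mathcal{K}'$ has polynomial size in $\mathcal{K}$, which is precisely the reduction-plus-polynomiality argument you spell out. Your additional remarks on why the construction is polynomial and on what Lemma~\ref{reduction_lemma} has to establish are accurate elaborations of points the paper only states or defers.
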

  
   Since concept satisfiability, subsumption and instance checking are polynomially interreducible (see Lemma \ref{lemm_reasoning_red}), Theorem \ref{theo_hard} yields co-$\nexpnp$ lower bounds for the subsumption and the instance 
   checking problem.
   
   \begin{corollary}\label{cor_main}
      In $\logicplusmin$, it is $\nexpnp$-complete to decide concept satisfiability and co-$\nexpnp$-complete to decide subsumption and instance checking.   
   \end{corollary}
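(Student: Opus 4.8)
The plan is to assemble the matching upper and lower bounds already established and to move the hardness of concept satisfiability onto the two entailment tasks by duality. First I would dispose of concept satisfiability. Its membership in \nexpnp{} is Lemma \ref{lemma_upper_bound}: that lemma reduces satisfiability of an extended concept w.r.t.\ a \logicplus{} knowledge base to satisfiability of its translation $\bar{C_0}$ w.r.t.\ the concept-circumscribed knowledge base $\circs{CP}{\tbox'}{\abox'}$, the encoding is polynomial in the size of $\mathcal{K}$, and concept satisfiability for concept-circumscribed $\alc$ knowledge bases lies in \nexpnp{} \cite{DBLP:journals/jair/BonattiLW09}. The matching lower bound is exactly Theorem \ref{theo_hard}. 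Hence concept satisfiability is \nexpnp-complete.

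Next I would treat subsumption and instance checking. The co-\nexpnp{} upper bounds come from the same translation together with the co-\nexpnp{} membership of subsumption and instance checking for concept-circumscribed $\alc$ knowledge bases \cite{DBLP:journals/jair/BonattiLW09}; this is precisely the upper-bound theorem recorded at the end of the preceding subsection. For the lower bounds I would exploit that both tasks are universally quantified over minimal models and are therefore the duals of concept satisfiability. Concretely, $\minsat{\mathcal{K}}{C_e \sqsubseteq \bot}{\mathcal{L}_{\T^+}}$ holds iff $C_e$ is unsatisfiable w.r.t.\ $\mathcal{K}$, so concept unsatisfiability reduces in polynomial time to subsumption; since concept unsatisfiability is the complement of the \nexpnp-hard concept satisfiability problem, it is co-\nexpnp-hard, and therefore so is subsumption. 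For instance checking I would invoke the polynomial interreducibility of the three tasks, whose delicate direction is supplied by Lemma \ref{lemm_reasoning_red}: read from concept unsatisfiability into instance checking, it yields co-\nexpnp-hardness. Combining the two bounds gives co-\nexpnp-completeness of subsumption and of instance checking.

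The point that needs care is the bookkeeping of the reduction directions. Because concept satisfiability sits in \nexpnp{} whereas subsumption and instance checking sit in co-\nexpnp, the interreductions must relate concept satisfiability to the \emph{complements} of the two entailment tasks; reading them the wrong way would (absurdly) collapse the two classes. I would therefore verify explicitly that the map $C_e \mapsto (C_e \sqsubseteq \bot)$ and the encoding of Lemma \ref{lemm_reasoning_red} are genuine many-one reductions from the co-\nexpnp-complete concept unsatisfiability problem, in particular that the fresh operator $\T_{k+1}$ and the fresh individual $a$ introduced there do not perturb the minimal models that witness (un)satisfiability. Since the complexity does not depend on the number of typicality operators once $k \geq 2$ (as noted after Lemma \ref{lemm_reasoning_red}), adding $\T_{k+1}$ is free, and the whole argument stays within the stated classes.
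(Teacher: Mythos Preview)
Your overall strategy matches the paper's: upper bounds via the polynomial translation into concept-circumscribed $\alc$ knowledge bases together with the results of \cite{DBLP:journals/jair/BonattiLW09}, the lower bound for concept satisfiability from Theorem~\ref{theo_hard}, and transfer of hardness to subsumption and instance checking via the polynomial interreducibility of the three tasks. That is exactly how the paper obtains the corollary.

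There is, however, one concrete slip. You invoke Lemma~\ref{lemm_reasoning_red} as the reduction \emph{from} concept unsatisfiability \emph{to} instance checking, and in your last paragraph you explicitly call it a many-one reduction from the co-\nexpnp-complete concept unsatisfiability problem. It is not: the lemma takes an instance-checking query $C_e(a)$ over $\mathcal{K}$ and produces a concept $\neg\T_{k+1}(A)\sqcap\neg C_e$ together with $\mathcal{K}'$, i.e.\ it is a reduction \emph{from} instance checking \emph{to} concept (un)satisfiability. That direction is what gives the co-\nexpnp\ \emph{upper} bound for instance checking, not its hardness. For hardness you need the opposite direction, which the paper treats as one of the ``easy'' reductions inherited from \cite{DBLP:journals/jair/BonattiLW09} (and which is indeed routine, e.g.\ passing through subsumption by the map $C_e\mapsto(C_e\sqsubseteq\bot)$ you already describe, and then from subsumption to instance checking). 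So your argument goes through once you swap in the correct direction of the interreducibility; just be aware that Lemma~\ref{lemm_reasoning_red} is precisely the direction you do \emph{not} need here.
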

   
  Finally, we remark that the translations provided between $\logicplusmin$ and 
\emph{concept-circumscribed} knowledge bases do not depend on the classical constructors 
  of the description logic $\alc$. Therefore, the same translations can be used for the 
more expressive description logics $\alcio$ and $\alcqo$. From the complexity results obtained 
  in \cite{DBLP:journals/jair/BonattiLW09} for circumscription in $\alcio$ and $\alcqo$ , we also 
obtain the following corollary. 
  
  \begin{corollary}\label{cor_main}
      In $\logicplusminio$ and $\logicplusminqo$, it is $\nexpnp$-complete to decide concept satisfiability and co-$\nexpnp$-complete to decide subsumption and instance checking.   
   \end{corollary}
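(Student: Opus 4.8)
The plan is to observe that every construction in the preceding sections is \emph{constructor-agnostic}. The encoding $\mathcal{K} \mapsto \circs{CP}{\tbox'}{\abox'}$ and the mapping $\varphi$ only introduce the fresh concept names $A_i^*$, the fresh roles $r_i$ together with the axioms $A_i^* \equiv \exists r_i.(A \sqcap \neg A_i^*)$ and $\exists r_i.A_i^* \sqsubseteq A_i^*$, and the replacement of each subconcept $\T_i(A)$ by $A \sqcap \neg A_i^*$. None of these steps touches the inverse roles, nominals, or qualified number restrictions that may occur in the classical part of the concept descriptions. Hence, when $\mathcal{K}$ is an $\logicplusminio$ (resp.\ $\logicplusminqo$) knowledge base, the resulting $\circs{CP}{\tbox'}{\abox'}$ is a \emph{concept-circumscribed} $\alcio$ (resp.\ $\alcqo$) knowledge base: the minimized predicates $A_i^*$ are concept names, so $M \subseteq \cset$ as required, and the fresh roles $r_i$ are placed in the varying set $V$.

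For the upper bound, I would verify that Lemmas~\ref{lemma_transform_r}, \ref{lemma_relation}, \ref{lemma_isomorphism}, \ref{lemma_min_models}, and \ref{lemma_upper_bound} continue to hold. The structural induction establishing $C_e^\I=(\bar{C_e})^{\varphi(\I)}$ in Lemma~\ref{lemma_relation} extends to the additional cases (inverse roles, nominals, number restrictions) at once, because $\varphi$ leaves the domain, every individual name, and every original concept and role name unchanged, touching only the fresh symbols $A_i^*$ and $r_i$; the remaining correspondence lemmas reason solely about these fresh symbols and the semantics of the $\T_i$. Consequently the bijection $\varphi$, its order-preservation, and the reduction of concept satisfiability in $\logicplusminio$ / $\logicplusminqo$ to concept satisfiability in the associated concept-circumscribed $\alcio$ / $\alcqo$ knowledge base all survive. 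Invoking the $\nexpnp$ upper bound for concept satisfiability and the co-$\nexpnp$ upper bound for subsumption and instance checking in concept-circumscribed $\alcio$ and $\alcqo$ knowledge bases from \cite{DBLP:journals/jair/BonattiLW09} then gives the upper bounds.

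For the lower bound, I would use that $\logicplusmin$ is exactly the syntactic fragment of $\logicplusminio$ (resp.\ $\logicplusminqo$) that uses neither nominals nor inverse roles (resp.\ number restrictions), and that the minimal-model semantics of the richer logics restricts to that of $\logicplusmin$ on this fragment. Therefore every $\logicplusmin$ instance of concept satisfiability is already an instance in the richer logic with the same answer, and the $\nexpnp$-hardness of Theorem~\ref{theo_hard} transfers immediately. The co-$\nexpnp$ lower bounds for subsumption and instance checking then follow by the polynomial interreducibility of Lemma~\ref{lemm_reasoning_red}, whose reduction is itself constructor-agnostic (it adds only a fresh operator $\T_{k+1}$, a fresh concept name $A$, and the axiom $\top \sqsubseteq A$) and hence applies without change in $\alcio$ and $\alcqo$. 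Combining the matching bounds yields completeness.

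The step I expect to be the most delicate is re-checking the auxiliary Lemma~\ref{lemma_transform_r} in the richer setting: its proof surgically rewrites each $r_i^\J$ so that $(r_i^\J)^{-1}$ becomes well-founded while keeping $X^\I=X^\J$ for all remaining symbols, and with inverse roles or number restrictions present one must confirm that this rewriting cannot violate any axiom of the original knowledge base. This is guaranteed because each $r_i$ is a fresh role that occurs in $\tbox'$ only inside the two translation axioms $(1)$ and $(2)$ and never appears---even inversely or under a number restriction---in the translated images $\bar{C}\sqsubseteq\bar{D}$ of the original GCIs; thus altering $r_i$ affects only $(1)$ and $(2)$, whose satisfaction the construction explicitly preserves.
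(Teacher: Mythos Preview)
Your proposal is correct and follows essentially the same approach as the paper, which simply remarks that the translations in both directions are independent of the classical $\alc$ constructors and then invokes the complexity results for concept-circumscribed $\alcio$ and $\alcqo$ from \cite{DBLP:journals/jair/BonattiLW09}. Your write-up is considerably more detailed than the paper's one-sentence justification---in particular your explicit re-examination of Lemma~\ref{lemma_transform_r} for the fresh roles $r_i$ is a point the paper does not spell out---but the underlying argument is the same.
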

   
   Moreover, from the lower bound obtained in \cite{DBLP:journals/ai/GiordanoGOP13} for $\logicOmin$, the results also apply for the logics $\logicminio$ and $\logicminqo$.
   \begin{corollary}
      In $\logicminio$ and $\logicminqo$, it is $\nexpnp$-complete to decide concept satisfiability and co-$\nexpnp$-complete to decide subsumption and instance checking.      
   \end{corollary}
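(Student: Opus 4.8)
The plan is to prove the completeness results by treating the upper and lower bounds separately, reusing the machinery already developed for the multi-operator logics together with the known hardness of $\logicOmin$.

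For the upper bounds I would first observe that $\logicminio$ and $\logicminqo$ are exactly the fragments of $\logicplusminio$ and $\logicplusminqo$ obtained by admitting a single typicality operator, i.e.\ by setting $k=1$. Every $\logicminio$ (resp.\ $\logicminqo$) knowledge base and extended concept is literally a $\logicplusminio$ (resp.\ $\logicplusminqo$) one with $k=1$, and when $k=1$ the preference relation $\pref$ collapses to $\prefs$, so the two minimal-model semantics coincide. Consequently concept satisfiability, subsumption, and instance checking in the single-operator logics are special cases of the corresponding problems in the multi-operator logics, and the $\nexpnp$ and co-$\nexpnp$ upper bounds of the previous corollary transfer verbatim. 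Equivalently, one may note that the circumscription translation of the previous subsection does not depend on the $\alc$ constructors and is already polynomial for $k=1$ over $\alcio$ and $\alcqo$, so the bounds of \cite{DBLP:journals/jair/BonattiLW09} apply directly.

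For the lower bounds I cannot reuse the hardness proof of the previous corollary, since the reduction of Lemma \ref{reduction_lemma} essentially relies on two typicality operators. Instead I would invoke the $\nexpnp$-hardness of concept satisfiability for $\logicOmin$ established in \cite{DBLP:journals/ai/GiordanoGOP13}. The key observation is that $\alco \subseteq \alcio$ and $\alco \subseteq \alcqo$, so $\logicOmin$ is a syntactic fragment of both $\logicminio$ and $\logicminqo$; moreover all three logics use a single preference relation and minimize exactly the concepts of the form $\neg\Box\neg A$ collected in $\minset{\I}{}$. Hence a $\logicOmin$ knowledge base has precisely the same interpretations, the same models, and therefore the same minimal models when read in the richer logic, so the reduction witnessing hardness is simply the identity. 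This yields $\nexpnp$-hardness of concept satisfiability in $\logicminio$ and $\logicminqo$. The co-$\nexpnp$ lower bounds for subsumption and instance checking then follow from the usual polynomial interreductions among the three tasks, which carry over to the nominal-equipped logics (cf.\ Lemma \ref{lemm_reasoning_red}, where the auxiliary typicality operator can be replaced by the nominal $\{a\}$ available in $\alcio$ and $\alcqo$); in particular, concept unsatisfiability is the special case $C \sqsubseteq \bot$ of subsumption, so co-$\nexpnp$-hardness of subsumption is immediate.

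Combining the matching bounds gives $\nexpnp$-completeness for concept satisfiability and co-$\nexpnp$-completeness for subsumption and instance checking. The one point that requires care — and which I regard as the only real obstacle — is the semantic-agreement check underlying the identity reduction: one must verify that passing from $\logicOmin$ to $\logicminio$ or $\logicminqo$ neither creates nor destroys models of a knowledge base written in the smaller language, and preserves exactly which of them are minimal. This is immediate here because enriching the base DL only imposes interpretation constraints on the new constructors (inverse roles, number restrictions, additional nominals), none of which occur in a pure $\logicOmin$ knowledge base, while the single preference relation and the minimized sets $\minset{\I}{}$ are defined identically in all three logics.
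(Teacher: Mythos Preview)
Your proposal is correct and follows essentially the same route as the paper: upper bounds are inherited from the previous corollary for $\logicplusminio$/$\logicplusminqo$ (the single-operator logics being the $k=1$ fragment, and the circumscription translation being constructor-agnostic), while lower bounds are imported from the $\nexpnp$-hardness of $\logicOmin$ in \cite{DBLP:journals/ai/GiordanoGOP13} via the syntactic inclusions $\alco\subseteq\alcio$ and $\alco\subseteq\alcqo$. Your additional care about the semantic-agreement check for the identity reduction, and your observation that nominals can replace the auxiliary operator $\T_{k+1}$ of Lemma~\ref{lemm_reasoning_red} in these logics, are sound refinements that the paper leaves implicit.
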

   
\section{Conclusions}

 In this paper, we have provided an extension of the non-monotonic description logic $\logicmin$, 
by 
adding the possibility to use more than one preference relation over the domain 
 elements. This extension, called $\logicplusmin$, allows to express typicality of a class of 
elements with respect to different aspects in an ``independent'' way. Based on this, 
 a class of elements $P$ that is exceptional with respect to a superclass $B$ regarding a specific aspect, could still be not exceptional with respect to different unrelated aspects. 
 The latter permits that defeasible properties from $B$ not conflicting with the exceptionality of 
$P$, can be inherited by elements in $P$. As already observed in the paper, this is not possible in 
the logic $\logicmin$.

  In addition, we have introduced translations that show the close relationship between 
$\logicplusmin$ and \emph{concept-circumscribed} knowledge bases in $\alc$. First, the provided 
translation from $\logicplusmin$ into \emph{concept-circumscribed} knowledge bases is polynomial, 
in contrast with the exponential translation given in \cite{DBLP:journals/ai/GiordanoGOP13} for 
$\logicmin$. Second, the translation presented for the opposite direction shows how to encode 
circumscribed knowledge base, by using two typicality operators and no nominals.

Using these translations, we were able 
  to determine the complexity of deciding the different reasoning tasks in $\logicplusmin$. We have 
shown that it is $\nexpnp$-complete to decide concept satisfiability and 
  co-$\nexpnp$-complete to decide subsumption and instance checking. Moreover, the same 
translations can be used for the corresponding extensions of $\logicplusmin$ into more 
  expressive description logics like $\alcio$ and $\alcqo$. The results also apply for extensions of $\logicmin$ with respect to the underlying description logics, in view of the hardness result shown for $\logicOmin$ in \cite{DBLP:journals/ai/GiordanoGOP13}.
  
  As possible future work, the exact complexity for reasoning in $\logicmin$ still remains open. It would be interesting to see 
  if it is actually possible to improve the $\nexpnp($co-$\nexpnp)$ upper bounds. If that were the case, there is a possibility to identify a corresponding fragment from 
  \emph{concept-circumscribed} knowledge bases with a better complexity than 
$\nexpnp($co-$\nexpnp)$.
  
  As a different aspect, it can be seen that the logic $\logic$ and our proposed extension $\logicplus$ impose syntactic 
  restrictions on the use of the typicality operator. First, it is not possible to use a typicality operator under a role operator. Second, 
  only subsumption statements of the form $\T(A) \sqsubseteq C$ are allowed in the TBox. 
  The latter, seems to come from the fact that $\logic$ is based on the approach to propositional 
non-monotonic reasoning proposed in \cite{DBLP:journals/ai/LehmannM92}, where a conditional 
assertion of the form $A \q C$ is used to express that $A$'s \emph{normally} have property 
$C$.
  
  As an example, by lifting these syntactic restrictions, one will be able to express things like:\[\T(\mathsf{Senior\_Teacher}) \sqsubseteq 
\mathsf{Excellent\_Teacher}\] \[\T(\mathsf{Student}) \sqsubseteq \forall \mathsf{attend}.(\mathsf{Class} \sqcap 
\exists\mathsf{imparted}.\T(\mathsf{Senior\_Teacher}))\] 

This allows to relate the typical instances from different classes in a way which is not possible with the current syntax. From a complexity point of view, it is not difficult to observe that the given translations in the paper will also be applicable in this case, without increasing the overall 
    complexity. The reason is that after lifting the mentioned syntactic restrictions, the 
occurrences of $\T_i(A)$ in an extended concept can still be seen as basic concepts.
    
    Therefore, it would be interesting to study what are the effects of removing these 
restrictions, with respect to the kind of conclusions that would be obtained from a 
knowledge base expressed in the resulting non-monotonic logic.
\section{Acknowledgements}
I thank my supervisors Gerhard Brewka and Franz Baader for helpful discussions.
\bibliographystyle{aaai}
\bibliography{external}
%\clearpage
%\input paper_appendix
\end{document}